\Crefname{equation}{}{}
\Crefname{figure}{Figure}{}
\newcommand\blfootnote[1]{%
  \begingroup
  \renewcommand\thefootnote{}%
  \footnotetext{#1}%
  \addtocounter{footnote}{-1}%
  \endgroup
}
\theoremstyle{thmstyleone}%
\newtheorem{theorem}{Theorem}[section]% 
\newtheorem{lemma}[theorem]{Lemma}
\newtheorem{prop}[theorem]{Proposition}
\theoremstyle{thmstyletwo}%
\newtheorem{example}[theorem]{Example}
\newtheorem{remark}[theorem]{Remark}
\theoremstyle{thmstylethree}%
\newtheorem{ass}[theorem]{Assumption}
\newtheorem{problem}[theorem]{Problem}
\numberwithin{equation}{section}
\newcommand{\R}{\mathbb{R}}
\newcommand{\N}{\mathbb{N}}
\DeclareMathOperator{\T}{\mathrm{T}}
\DeclareMathOperator{\SA}{\mathrm{SA}}
\DeclareMathOperator{\FF}{\mathrm{FF}}
\DeclareMathOperator{\co}{\overline{co}}
\DeclareMathOperator{\ext}{ext}
\newcommand{\ip}[2]{\langle #1, #2 \rangle}
\newcommand{\C}{\mathcal{C}}
\newcommand{\abs}[1]{\left\vert #1 \right\vert}
\DeclareMathOperator*{\argmax}{arg\,max}
\DeclareMathOperator*{\argmin}{arg\,min}
\DeclareMathOperator{\len}{len}
\begin{document}

\title[Exact Sequence Interpolation with Transformers]{Exact Sequence Interpolation with Transformers}

%%=============================================================%%
%% GivenName	-> \fnm{Joergen W.}
%% Particle	-> \spfx{van der} -> surname prefix
%% FamilyName	-> \sur{Ploeg}
%% Suffix	-> \sfx{IV}
%% \author*[1,2]{\fnm{Joergen W.} \spfx{van der} \sur{Ploeg} 
%%  \sfx{IV}}\email{iauthor@gmail.com}
%%=============================================================%%

\author*[1]{\fnm{Albert} \sur{Alcalde}}\email{albert.alcalde@fau.de}

\author[1]{\fnm{Giovanni} \sur{Fantuzzi}}\email{giovanni.fantuzzi@fau.de}
% \equalcont{These authors contributed equally to this work.}

\author[1,2,3]{\fnm{Enrique} \sur{Zuazua}}\email{enrique.zuazua@fau.de}
% \equalcont{These authors contributed equally to this work.}

\affil*[1]{\orgdiv{Chair for Dynamics, Control, Machine Learning, and Numerics (Alexander von Humboldt Professorship)}, \orgname{Friedrich--Alexander--Universit\"at Erlangen--N\"urnberg}, \orgaddress{\street{Cauerstrasse, 11}, \city{Erlangen}, \postcode{91058}, \country{Germany}}}

\affil[2]{\orgdiv{Departamento de Matem\'{a}ticas}, \orgname{Universidad Aut\'{o}noma de Madrid}, \city{Madrid}, \postcode{28049}, \country{Spain}}

\affil[3]{\orgdiv{Chair of Computational Mathematics}, \orgname{Fundaci\'{o}n Deusto}, \orgaddress{\street{Av. de las Universidades, 24}, \city{Bilbao}, \postcode{48007}, \country{Spain}}}

%%==================================%%
%% Sample for unstructured abstract %%
%%==================================%%

\abstract{
We prove that transformers can exactly interpolate datasets of finite input sequences in $\R^d$, $d\geq 2$, with corresponding output sequences of smaller or equal length. Specifically, given $N$ sequences of arbitrary but finite lengths in $\R^d$ and output sequences of lengths $m^1, \dots, m^N \in \N$, we construct a transformer with $\mathcal{O}(\sum_{j=1}^N m^j)$ blocks and $\smash{\mathcal{O}(d \sum_{j=1}^N m^j)}$ parameters that exactly interpolates the dataset.
Our construction provides complexity estimates that are independent of the input sequence length, by alternating feed-forward and self-attention layers and by capitalizing on the clustering effect inherent to the latter. Our novel constructive method also uses low-rank parameter matrices in the self-attention mechanism, a common feature of practical transformer implementations. 
These results are first established in the hardmax self-attention setting, where the geometric structure permits an explicit and quantitative analysis, and are then extended to the softmax setting. Finally, we demonstrate the applicability of our exact interpolation construction to learning problems, in particular by providing convergence guarantees to a global minimizer under regularized training strategies. Our analysis contributes to the theoretical understanding of transformer models, offering an explanation for their excellent performance in exact sequence-to-sequence interpolation tasks.
}

\keywords{Deep Learning, Self-Attention, Approximation, Transformer Architecture, Mathematics of Machine Learning}

\maketitle

\blfootnote{Author accepted manuscript. The version of record is published in \emph{Math. Found. Mach. Learn.} 2, 2 (2026), \href{https://doi.org/10.1007/s44439-026-00005-y}{doi:10.1007/s44439-026-00005-y}.}

\section{Introduction}\label{sec:introduction}

Transformers \cite{vaswaniAttentionAllYou2017} have revolutionized machine learning (ML) by outperforming traditional residual networks (ResNets) in applications such as natural language processing \cite{openai2024gpt4technicalreport} and computer vision \cite{dosovitskiy2021imageworth16x16words}, where inputs naturally take the form of finite sequences of $d$-dimensional vectors. 
Their practical success relies on the effectiveness of the self-attention layers, which act between feed-forward layers to update each vector in the sequence by aggregating information from all other vectors.

The expressive power of feed-forward layers has already been studied extensively, see, e.g., \cite{Cybenko1989ApproximationBS, HORNIK1991251, leshno1993multilayer, Pinkus_1999, yarotsky2017error, park2021provable, vardi2022on, li2022deep, domenec2023NODES, alvarez2025clusterClassification}. In particular, it is known \cite{yun2019small} that sufficiently deep feed-forward networks can interpolate arbitrary input data. Unlike feed-forward networks, transformers rely on self-attention, and thus their interpolation properties and complexity scaling require a distinct analysis.

\subsection{Summary of contributions}\label{ss:discussion}

In this work, we prove that certain transformer architectures can perfectly interpolate data and we provide explicit complexity estimates in terms of layers and parameter count. The key insight of our analysis is that transformers can attain the same expressive power as architectures purely based on feed-forward layers with significantly fewer parameters. In more detail, our core contributions are the following.

\begin{itemize}%[leftmargin=2em, itemsep=0.6em]

\item \textit{Exact interpolation with real vector-valued outputs.}
We show via an explicit construction that transformer parameters can always be chosen to interpolate arbitrary real vector-valued sequence-to-sequence maps exactly. We therefore overcome previous limitations to discrete outputs \cite{kim2023provable} or approximate interpolation \cite{geshkovski2024measure} (see also \Cref{ss:relatedWork} below for further discussion).

\item \textit{Complexity independent of input lengths.}
The required transformer size depends only on the total output length, not on the (possibly much longer) input lengths. This helps explain why transformers excel in tasks where long inputs must be mapped to shorter outputs, such as text summarization or classification.

\item \textit{Parameter efficiency through low-rank attention.}
By parameterizing the attention layer of our transformer using matrices that are sums of a rank-one matrix and a multiple of the identity, the number of parameters in our construction grows linearly with the input embedding dimension $d$. This is consistent with efficient transformer designs used in practice.

\item \textit{Implications for regularized training.}
The existence of exact interpolating transformers ensures that global minimizers of regularized training objectives scale linearly with the regularization parameter, giving a practical criterion to detect whether training converges globally or only to local minima.
\end{itemize}

We present the problem statement and our results more precisely in \Cref{ss:prob.statement} and \Cref{ss:mainResults}. Similarities and differences with existing works are discussed in \Cref{ss:relatedWork}. We conclude by comparing the parameter complexity of transformers and ResNets in \Cref{ss:paramComplexity}. 

\subsection{Problem statement}\label{ss:prob.statement}
Fix $d\in \N$. 
For $n \in \mathbb{N}$, we denote by $(\mathbb{R}^d)^n$ the set of tuples of vectors in $\mathbb{R}^d$.
In keeping with standard terminology, we call $X = (x_1, \ldots, x_n) \in (\mathbb{R}^d)^n$ a \emph{sequence} and each vector $x_i$ a \emph{token}. 
Since our constructions are permutation-equivariant \cite{yunAreTransformersUniversal2020}, we identify two sequences if they differ only by a permutation of their tokens, so sequences may be regarded mathematically as unordered sets.
The set of all finite-length non-empty sequences in $\mathbb{R}^d$ is

\begin{equation*}
    \mathcal{X} = \bigcup_{n \in \N} (\R^d)^n,
\end{equation*}
which we view as a topological space equipped with the Hausdorff distance between two sequences $X,Y \in \mathcal{X}$
\begin{equation}\label{eq:hausdorffDistance}
    d_H (X,Y) \coloneqq \max \left( \max_{x\in X} \min_{y\in Y} \| x - y\|_2, \max_{y\in Y} \min_{x\in X} \| x - y\|_2 \right),
\end{equation}
where $\| \cdot \|_2$ denotes the $\ell^2$ norm in $\R^d$. Let $N\in \N$. We consider a dataset of $N$ input-output pairs of sequences: inputs $X^1,\ldots,X^N \in \mathcal{X}$ of lengths $n^1, \dots, n^N \in \N$, and corresponding outputs $Y^1, \dots, Y^N \in \mathcal{X}$ of lengths $m^1, \dots, m^N \in \N$, where $m^j \leq n^j$ for all $j\in [N]\coloneqq \{ 1,\dots, N \}$.

Sequential datasets with the previously described characteristics appear naturally in language and vision applications. In \emph{masked language modeling} \cite{devlin-etal-2019-bert}, each input sequence $X^j$ corresponds to an incomplete sentence and the output $Y^j$ consists of the missing (or ``masked'') words. Analogously, in \emph{masked image modeling} \cite{he2022masked, bao2022beit}, the input sequence $X^j$ represents an image with missing patches, while the output $Y^j$ contains the missing content. For further illustration, we consider the following example in language modeling. 

\begin{example}
Consider the following masked sentences:
\begin{align*}
    X^1 &= \text{``The [mask1] is blue''}, & Y^1 &= \text{``sky''}, \\
    X^2 &= \text{``She likes eating [mask1] [mask2]''}, & Y^2 &= \text{``ice cream''}.
\end{align*}
The inputs $X^1, X^2$ have different lengths ($n^1 = 4$, $n^2 = 5$), and the outputs $Y^1, Y^2$ also differ in length ($m^1 = 1$ and $m^2 = 2$). This illustrates a key feature of our setting: input and output sequences may have different lengths, with the output typically shorter than the input.
\end{example}
The interpolation problem for the dataset consists of constructing a function that maps each input $X^j$ to the corresponding output $Y^j$. 
Transformers are an appropriate class of models to solve such problems, since they process variable-length inputs in a permutation-equivariant way. For our purposes, we view a transformer as a parametrized map $\T : \mathcal{X} \to \mathcal{X}$ acting on sequences of vectors in $\R^d$. Its precise architecture will be described in \Cref{sec:tfArchitecture}, but for the present discussion it suffices to note that transformers preserve sequence length: if $X^j$ has length $n^j$, then $\T(X^j)$ also has length $n^j$. This creates a mismatch with our outputs $Y^j$, which are of equal or shorter length ($m^j \leq n^j$). We resolve this mismatch by viewing equality between sequences in the set sense, that is, two sequences are equal when their Hausdorff distance is equal to zero. We are therefore led to the following exact interpolation problem. 

\begin{problem}[Exact interpolation]\label{pb:exactInterp}
    Construct a transformer $\T : \mathcal{X} \to \mathcal{X}$ such that $ \T(X^j) = Y^j$ for all $j \in [N]$, with explicit bounds on the parameter count.
\end{problem}

\begin{remark}
    \Cref{pb:exactInterp} is formulated for permutation-equivariant transformers, where the order of tokens does not matter. Positional encodings \cite{vaswaniAttentionAllYou2017} can be employed in a separate pre-processing step to add information about the order of tokens if desired, but we do not account for it here because this does not affect the behavior of the transformer. For sequence modeling tasks that require auto-regressive next-token prediction, the permutation equivariance can also be removed by employing ``masked'' self-attention. The relation between this masked auto-regressive setting and \Cref{pb:exactInterp} is discussed in \Cref{sss:maskedSA}, where we show how the former can be reduced to an instance of the latter.
\end{remark}
\begin{remark}
    Viewing the equality $\T(X^j)=Y^j$ as a set equality means that, if the output sequence consists of a single token, then \emph{every} token of the sequence $\T(X^j)$ must coincide with the desired output token. This is harder to achieve compared to matching only a specific token of the input sequence to the desired output, but can be done with a transformer with the same leading-order complexity.
\end{remark}
To ensure \cref{pb:exactInterp} is solvable, we make the following assumptions on the dataset.

\begin{ass}\label{ass:dataset}
    The input and output sequences $\{(X^j, Y^j)\}_{j\in [N]}$ have the following properties:
    \begin{enumerate}
    \item[i)] The sequences $X^1,\ldots, X^N$ are pairwise distinct.
    \item[ii)] Tokens within each $X^j$ are pairwise distinct.
    \item[iii)] Tokens within each $Y^j$ are pairwise distinct.
    \end{enumerate}
\end{ass}

Assumption \textit{i)} could be replaced with the weaker requirement that $X^j=X^{j'}$ implies $Y^j = Y^{j'}$. This condition is necessary to ensure that \Cref{pb:exactInterp} is solvable. However, if $X^{j}=X^{j'}$ and $Y^j = Y^{j'}$, then the pair $(X^{j'}, Y^{j'})$ can be dropped from the dataset without changing \Cref{pb:exactInterp} to ensure that Assumption \textit{i)} holds.
Assumptions \textit{ii)} and \textit{iii)} are mild since positional encoding \cite{press2022trainshorttestlong} usually ensures that tokens in a sequence are distinct. 
The pairwise-distinctness requirement is not essential: with minor proof edits (at the cost of heavier notation), repeated tokens in an input sequence $X^j$ can be replaced by a single token with more ``mass''. Concretely, replace the constant $1/|\mathcal{C}_i(X,A)|$ in \Cref{eq:hardmaxFormulation} with $w_\ell/\sum_{k \in \mathcal{C}_i(X,A)} w_k$, where $w_\ell$ counts the number of times token $x_\ell$ is repeated.

\subsection{Main results}\label{ss:mainResults}
We solve \Cref{pb:exactInterp} for datasets satisfying  \Cref{ass:dataset} in dimension $d\geq 2$\footnote{The case $d=1$ is of limited practical interest, so the restriction is mild.} using a transformer with $\smash{\mathcal{O}(d \sum_j m_j)}$ parameters that alternates single-head self-attention and feed-forward layers with ReLU activations and trainable residual connections. We refer to a single pair of feed-forward and self-attention layers as a transformer \textit{block}. Details of the architecture are described in \Cref{sec:tfArchitecture}.

We analyze two cases, depending on whether the self-attention layers are defined using a hardmax or a softmax attention mechanism. While only the latter is used in practice, the former allows for a simpler geometric intuition that facilitates analysis~\cite{alcalde2025clustering}. Additionally, ``hardmax transformers'' are closely related to ``softmax transformers'' because the hardmax function arises when the so-called temperature parameter in the softmax function tends to zero~\cite{elfadelSoftmax1993}.

Our first main result establishes that hardmax transformers can exactly interpolate data and solve \Cref{pb:exactInterp}.

\begin{theorem}[Exact interpolation with hardmax transformers]
\label{thm:mainResult}
    Fix $N\in \N$, $d\geq 2$ and a dataset of sequences $\{(X^j, Y^j)\}_{j\in [N]}$ satisfying \Cref{ass:dataset}. 
    There exists a hardmax transformer $\T^0$ with feed-forward layers of width $d' \leq 4$, 
    \begin{equation}\label{eq:mainthm}
        L = 2\sum_{j=1}^N m^j + 2N +1\; \text{blocks,} \quad \text{and} \quad 
        P = \mathcal{O}\left(d \sum_{j=1}^N m^j\right)\; \text{parameters}
    \end{equation} 
    such that 
    $$
    \T^0 (X^j) = Y^j \quad \forall j \in [N].
    $$
\end{theorem}

\begin{remark}
   The number of parameters in \Cref{thm:mainResult} agrees up to a prefactor with the lower bound of $\smash{P \geq d \sum_j m^j}$ for any function $\T(\cdot,\theta):\mathcal{X}\to\mathcal{X}$ parameterized in a \emph{smooth} way by $\theta \in \R^P$. Indeed, if a parameter vector $\theta$ exists for every choice of output sequences $\{Y^j\}$ in a $\smash{d\sum_j m^j}$-dimensional subspace $\mathcal{Y}$ of finite-length sequences, then the map from the parameter space $\R^P$ to this space must be surjective. This requires $\smash{P \geq d \sum_j m^j}$, otherwise every point in $\R^P$ would be critical and the image $\T(\cdot,\R^P) \subset \mathcal{Y}$ would have measure zero by Sard's theorem \cite[\S7]{Guillemin1974}.
\end{remark}

The proof of \Cref{thm:mainResult}, presented in \Cref{sec:proofOfMainResult}, relies on a number of technical results proven in \cref{sec:technicalResults} that may be of independent interest. In particular, \Cref{lem:partialConc} shows how a self-attention layer can collapse tokens that lie in a suitable geometric configuration. Conversely, \Cref{lem:chooseLeader,lem:hatFFClassifier} show how self-attention and feed-forward layers can be used to separate overlapping tokens from different sequences. Finally, \Cref{lem:hatFFClassifier} demonstrates that feed-forward layers can move tokens individually once they are separated and collapsed. Together, these results provide simple geometric tools that underlie the construction of the interpolating transformer.

The second main result of this paper, proven in \Cref{sec:proofOfMainResultSoftmax}, is a solution to the exact interpolation problem that uses softmax transformers. Specifically, we prove that the statement of \Cref{thm:mainResult} remains true for softmax transformers at the expense of a slight increase in the number of blocks.

\begin{theorem}[Exact interpolation with softmax transformers]
\label{thm:mainResultSoftmax}
    Fix $N\in \N$, $d\geq 2$ and a dataset of sequences $\{(X^j, Y^j)\}_{j\in [N]}$ satisfying \Cref{ass:dataset}.
    There exists $\tau^\star > 0$ such that, for all $0 < \tau < \tau^\star$, there exists a softmax transformer $\T^{\tau}$ 
    with feed-forward layers of width $d' \leq 4$, 
    \begin{equation}\label{eq:mainthmSoftmax}
        L = 2\sum_{j=1}^N m^j + 3N \; \text{blocks,} \quad \text{and} \quad 
        P = \mathcal{O}\left(d \sum_{j=1}^N m^j\right)\; \text{parameters}
    \end{equation}
    such that 
    $$
    \T^\tau (X^j) = Y^j \quad \forall j \in [N].
    $$ 
\end{theorem}

Several comments about \Cref{thm:mainResultSoftmax} are in order. 
First, to the best of our knowledge this is the first \emph{exact} interpolation result for deep softmax transformers with real-valued outputs (see \Cref{ss:relatedWork} for further discussion). Moreover, we derive explicit complexity bounds in terms of both the number of blocks and the parameter count. 

Second, the theorem does not follow directly from the hardmax result by taking the ``zero-temperature limit'' of the softmax \cite{elfadelSoftmax1993}. Such an argument would only yield \emph{approximate} interpolation, with errors depending on~$\tau>0$. Our key contribution is to eliminate these errors exactly by exploiting the properties of ReLU feed-forward layers, at the cost of only $N-1$ additional blocks. 

Finally, the temperature can be normalized to $\tau^\star = 1$ by rescaling the self-attention parameter matrix in \eqref{eq:softmaxFormulation}. However, this rescaling comes at the expense of increasing the norm of the parameters in the self-attention layer. Bounding parameter norms is a substantially more delicate problem, which we do not attempt to solve. In practice, such norms depend nontrivially on the dispersion of the data and the separation of the outputs \cite{kim2023provable}. See \Cref{sss:paramNorms} for further discussion.

\subsection{Related work}\label{ss:relatedWork}

The interpolation capacity of transformers has been studied in previous works. In \cite{kim2023provable}, exact interpolation results were established for sequence-to-sequence maps with equal input and output lengths ($m = n$) and integer-valued outputs. The estimates in that work show that the required number of blocks grows as $\Tilde{\mathcal{O}}(n + \sqrt{Nn})$, where the $\Tilde{\mathcal{O}}$ notation hides logarithmic factors in $N$, $n$, and $d$. For real-valued outputs, however, the analysis yields only \emph{approximate} interpolation \cite[Remark~3.2]{kim2023provable}. In \cite{kajitsuka2024are}, the analysis of \cite{kim2023provable} is extended to vector-valued discrete output sequences using one-layer self-attention with low-rank weight matrices. Regarding next-token prediction capacity of shallow decoder transformers, \cite{madden2025next} prove an exact interpolation result where outputs are sequences of probability vectors over a finite vocabulary (see \Cref{sss:extensionProbVectorOutputs} for further comments on this setting). The interpolation properties of transformers have also been analyzed from a controllability-based viewpoint in \cite{agrachev2025generic}, where it is proven that exact interpolation is possible using transformers with more general self-attention layers than those commonly used in practice. In contrast, we prove exact interpolation for real-valued vector outputs in sequence-to-sequence tasks using deep single-head transformers with the standard self-attention mechanism. This requires a number of blocks that grows linearly in both $N$ and $n$, but crucially remains independent of $d$.

A different line of work is pursued in \cite{mahdavi2024memorization}, which studies interpolation in transformers with a single multi-head attention layer. There, the number of heads plays a role analogous to the width of feed-forward layers, and the authors show that $N$ sequences of length $n$ can be memorized with at least $\mathcal{O}(N/n)$ heads.
Our analysis focuses instead on single-head deep transformers, a structurally different architecture that leads to complementary results (see \Cref{sss:multihead} for further discussion on the multi-head case).

Finally, beyond interpolation, there are several works investigating the function approximation capabilities and dynamical behavior of transformers. For instance, the asymptotic dynamics of hardmax attention have been shown to induce clustering phenomena in token representations \cite{alcalde2025clustering}. 
The seminal result of \cite{yunAreTransformersUniversal2020} established that transformers are universal approximators of sequence-to-sequence maps. The work~\cite{alberti2023sumformer} extends these results to efficient variants of transformers, while \cite{furuya2025transformers} proves that transformers are universal approximators of ``in-context maps'' (see that work for a definition of this term). Yet another perspective is offered by \cite{geshkovski2024measure}, where transformers are interpreted as interacting particle systems within a measure-theoretic framework. In this setting, transformers achieve approximate interpolation in the Wasserstein-2 metric for measure-to-measure maps. The construction in \cite{geshkovski2024measure} can also solve \Cref{pb:exactInterp}, but requires $\mathcal{O}(Nd+\sum_{j=1}^N m^j)$ blocks and is limited to the case of datasets where inputs and outputs have the same length $m^j = \len(X^j) = \len (Y^j)$. \Cref{thm:mainResultSoftmax} improves upon these works by achieving lower block complexity and accommodating arbitrary output sequence lengths. 

\subsection{Parameter complexity of transformers versus ResNets}\label{ss:paramComplexity}
Since transformers can be viewed as an extension of ResNets, it is natural to compare our exact interpolation results with analogous strategies for ResNets. We discuss two possible approaches.

As a first approach, we remove self-attention layers entirely, which recovers a feed-forward only transformer 
$$
    \T_{\mathrm{FF}}: (\R^d)^n \to (\R^d)^n, \qquad \T_{\mathrm{FF}}(X) = \{\varphi(x_1), \dots, \varphi(x_n)\},
$$
where $\varphi : \R^d \to \R^d$ is realized by a (possibly deep) ResNet. This architecture acts independently on each token, which poses a fundamental limitation for solving  \Cref{pb:exactInterp} under \Cref{ass:dataset}. 
To see this, consider two input sequences that differ only in their last token,
$$
X^1 = \{ x_1, \ldots, x_{n-1}, x^1_n \}, 
\qquad 
X^2 = \{ x_1, \ldots, x_{n-1}, x^2_n \},
$$
with $x^1_n \neq x^2_n$, and corresponding output sequences
$$
Y^1 = \{ y_1^1, \ldots, y_n^1 \}, 
\qquad 
Y^2 = \{ y_1^2, \ldots, y_n^2 \},
$$
with $y_i^1 \neq y_i^2$ for all $i\in [n]$. Then, 
$$
    \T_{\mathrm{FF}}(X^j) = \{\varphi(x_1), \ldots, \varphi(x_{n-1}), \varphi(x^j_n)\}, \quad j=1,2,
$$
so one can never simultaneously have $\T_{\mathrm{FF}}(X^1) = Y^1$ and $\T_{\mathrm{FF}}(X^2) = Y^2$. Thus, \Cref{pb:exactInterp} cannot be solved by feed-forward only transformers in general.

We next discuss a second approach, which ignores sequence structure altogether. Setting $n = \max_{j\in[N]} n^j$, $m = \max_{j\in[N]} m^j$, and padding shorter sequences with zeros, each input $X^j$ can be regarded as a vector in $\R^{dn}$. Similarly, each output $Y^j$ can be viewed as a vector in $\R^{dm}$. 
In this setting, ResNets operate on $\R^{dn}$, so their parameter count necessarily scales with $n$. For example, the ResNets constructed in \cite{domenec2023NODES} require $\mathcal{O}(Nd n)$ parameters to interpolate the dataset, while our transformers require $\mathcal{O}(d\sum_{j = 1}^N m^j)$ parameters. This comparison yields two regimes:
\begin{itemize}
    \item If $m^j=n$ for all $j\in[N]$, transformers and ResNets have the same parameter order.
    \item If some $m^j<n$, transformers are strictly more efficient than ResNets.
\end{itemize}
The efficiency gain of transformers for shorter output sequences stems from the clustering effect of self-attention (see, e.g., \cite{geshkovski2023emergence, geshkovski2025mathematical, alcalde2025clustering, karagodin2024clustering, burger2025analysis}). Tokens within a sequence are carefully collapsed to be equal to the output sequence in the set sense. ResNets, in contrast, rely only on geometric correlations in the input, so parameter reductions can be established only probabilistically~\cite{alvarez2025clusterClassification}.

The key advantage of transformers is that they combine the token-level processing of ResNets with self-attention driven for separation and clustering. This structure enables exact interpolation for sequences of arbitrary input length, with parameter complexity depending only on the total output length. The tradeoff, which we do not address here, is the quadratic computational cost of self-attention with respect to the input sequence length \cite{vaswaniAttentionAllYou2017}.

\subsection{Organization of the paper}
The remainder of the paper is organized as follows. In \Cref{sec:implicationsTraining} we examine the implications of our exact interpolation results for regularized training. We then introduce the transformer architecture we analyze in \Cref{sec:tfArchitecture}. \Cref{sec:technicalResults} collects auxiliary technical results that will be used to prove \Cref{thm:mainResult} in \Cref{sec:proofOfMainResult} and \Cref{thm:mainResultSoftmax} in \Cref{sec:proofOfMainResultSoftmax}. Finally, \Cref{sec:conclusion} discusses perspectives and open problems.

\section{Implications for transformer regularized training}\label{sec:implicationsTraining}

The existence of an exactly interpolating transformer has important implications for regularized training.  
In this setting, the parameters $\theta \in \R^P$ of a transformer $\T_\theta$ are learned by solving the Tikhonov-regularized problem
\begin{equation}\label{eq:tikhonov.obj}
\min_{\theta \in \mathbb{R}^P} \left\{ F_\varepsilon(\theta) = \frac{1}{N}\sum_{j=1}^N f(\T_\theta (X^j), Y^j) + \varepsilon \kappa(\theta) \right\},
\end{equation}
where $f : \mathcal{X} \times \mathcal{X} \to [0,\infty)$ is a nonnegative lower semicontinuous function such that $f(Y, Y') = 0$ if and only if $Y = Y'$, $\varepsilon > 0$ is a regularization parameter, and the function $\kappa : \mathbb{R}^P \to [0,\infty)$ is lower semicontinuous and coercive (i.e. $\lim_{\|\theta\| \to \infty} \kappa(\theta) = \infty$). These assumptions ensure the minimum in \eqref{eq:tikhonov.obj} is attained by a minimizer $\theta_\varepsilon^\star$.

The unconstrained minimization in \eqref{eq:tikhonov.obj}, which is solved when training transformers in practice, is closely connected to the constrained problem
\begin{equation}\label{eq:tikhonov.constObj}
        \min_{\theta \in \mathbb{R}^P} \kappa(\theta) \quad \text{s.t. }  \T_\theta(X^j) = Y^j \quad \forall j \in [N],
    \end{equation}
as established by the following standard result in Tikhonov regularization theory (see, e.g., \cite{Engl1996Regularization}). We include the proof to make this paper self-contained.

\begin{prop}\label{prop:tikhonovProposition}
Suppose there exists $\theta_0\in \R^P$ and a softmax transformer $\T_{\theta_0}$ satisfying that $\T_{\theta_0}(X^j) = Y^j$ for all $j \in [N]$. Then:
\begin{enumerate}%[leftmargin=2em, itemsep=0.6em]
    \item[i)] There exists $\theta^\star_0$ solving \Cref{eq:tikhonov.constObj}.
    \item[ii)] $F_\varepsilon(\theta^\star_\varepsilon) \leq \varepsilon \kappa(\theta^\star_0)$. In particular, $\kappa(\theta^\star_\varepsilon) \leq \kappa(\theta^\star_0)$.
    \item[iii)] Any sequence of optimizers $\{\theta^\star_\varepsilon\}_{\varepsilon > 0}$ has a converging subsequence.
    \item[iv)] The limit of any converging subsequence $\{\theta^\star_\varepsilon\}_{\varepsilon > 0}$ solves \Cref{eq:tikhonov.constObj}.
\end{enumerate}    
\end{prop}
\begin{proof}
To prove i), note that any minimizing sequence $\{\theta_n\}$ for problem \cref{eq:tikhonov.constObj} remains in the sublevel set $\{\theta: \kappa(\theta)\leq \kappa(\theta_0)\}$, which is compact because $\kappa$ is coercive and lower semicontinuous. Pass to a convergent subsequence (not relabeled) and denote its limit by $\theta_0^*$. This point is feasible for \cref{eq:tikhonov.constObj} because the map $\theta \mapsto \T_\theta(X^j)$ is continuous for every fixed $X^j$. It also satisfies $\kappa(\theta_0^*) \leq \liminf_n(\theta_n)$ because $\kappa$ is lower semicontinuous. Then, since $\{\theta_n\}$ is a minimizing sequence, $\theta_0^*$ must be an optimal solution of \cref{eq:tikhonov.constObj}.

Next, we prove ii). Since $\theta^\star_0$ is admissible but not necessarily optimal for \Cref{eq:tikhonov.obj}, we obtain
\[
F_\varepsilon(\theta^\star_\varepsilon) \leq F_\varepsilon(\theta^\star_0) = \frac{1}{N}\sum_{j=1}^N f(\mathrm{T}_{\theta_0^\star}(X^j), Y^j) + \varepsilon \kappa(\theta^\star_0) = \varepsilon \kappa(\theta^\star_0).
\]
The second equality is true because $\mathrm{T}_{\theta_0^\star}(X^j) = Y^j$ and $f(Y^j, Y^j) = 0$. 

Finally, we prove iii) and iv). Since $f$ is nonnegative, we have that $
\varepsilon \kappa(\theta^\star_\varepsilon) \leq F_\varepsilon(\theta^\star_\varepsilon)$, which by part ii) implies that $\kappa(\theta^\star_\varepsilon) \leq \kappa(\theta^\star_0)$. Thus, $\{\theta^\star_\varepsilon\}_{\varepsilon > 0}$ is uniformly bounded and has a converging subsequence $\{\theta_{\varepsilon_j}\}$. Let $\theta^*$ be its limit. Since $\theta \mapsto \mathrm{T}_\theta(X^j)$ is continuous and $f$ is lower semicontinuous, the function 
\[
\theta \mapsto \sum_{i=1}^N f(\mathrm{T}_\theta(X^j), Y^j)
\]
is lower semicontinuous. We then find that
\[
0 \leq \sum_{i=1}^N f(\mathrm{T}_{\theta^*}(X^j), Y^j) \leq \liminf_{\varepsilon_j \to 0} F_{\varepsilon_j}(\theta^\star_{\varepsilon_j}) \leq \liminf_{\varepsilon_j \to 0} \varepsilon_j \|\theta^\star_0\| = 0.
\]
We deduce that $f(\mathrm{T}_{\theta^*}(X^j), Y^j) = 0$ for all $j \in [N]$, so $\mathrm{T}_{\theta^*}(X^j) = Y^j$ for all $j \in [N]$. Thus, $\theta^*$ is feasible for problem \cref{eq:tikhonov.constObj}. It is also optimal because $\kappa$ is lower semicontinuous and $\kappa(\theta^*_{\varepsilon_j}) \leq \kappa(\theta_0^*)$, so $\kappa(\theta^*) \leq \liminf \kappa(\theta^\star_{\varepsilon_j}) \leq \kappa(\theta^\star_0)$. The proof is complete.
\end{proof}

Several remarks on \Cref{prop:tikhonovProposition} are in order. 
First, the statement has been formulated for softmax transformers since their continuous dependence on $\theta$, when composed with a suitable lower semicontinuous function $f$, guarantees the lower semicontinuity of $\theta \mapsto F_\varepsilon(\theta)$. Consequently, a minimizer of \Cref{eq:tikhonov.obj} exists in this setting. In contrast, the discontinuity inherent to hardmax transformers prevents such a guarantee.

Second, \Cref{thm:mainResultSoftmax} establishes the existence of a parameter $\theta_0$ that exactly interpolates the dataset of $N$ sequences, provided the softmax transformer has width $d' \geq 4$ and $L \geq 2 \sum_{j=1}^N m^j + 3N$ blocks. As a result, the conclusions of \Cref{prop:tikhonovProposition} apply directly to this model class.

Finally, although the content of \Cref{prop:tikhonovProposition} is relatively standard, it carries practical implications for the training dynamics. In particular, if optimization converges to a global minimizer of the regularized objective \Cref{eq:tikhonov.obj}, then the training loss scales at worst linearly with the regularization parameter $\varepsilon$ as $\varepsilon \to 0$. Conversely, deviations from this behavior indicate either that the optimization procedure converges only to a local minimizer, or that exact interpolation is not possible with the chosen model architecture. We illustrate this phenomenon for transformer models with a synthetic example\footnote{Code used to reproduce the
example can be found at: \url{https://github.com/DCN-FAU-AvH/exactInterpTF}}.
\begin{figure}
    \centering
    \includegraphics[width=0.49\linewidth]{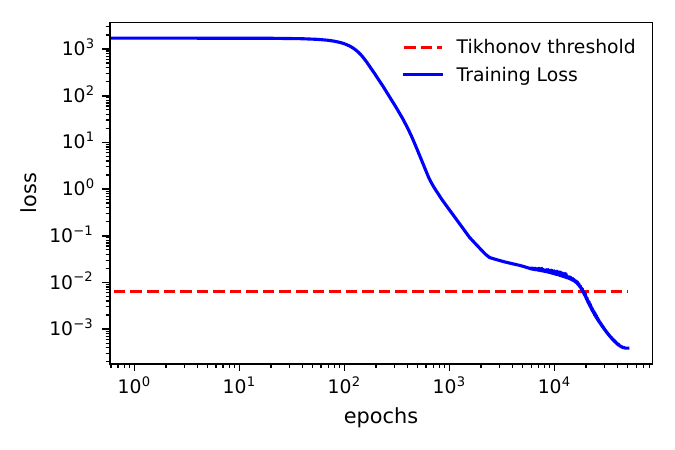}\hfill
    \includegraphics[width=0.49\linewidth]{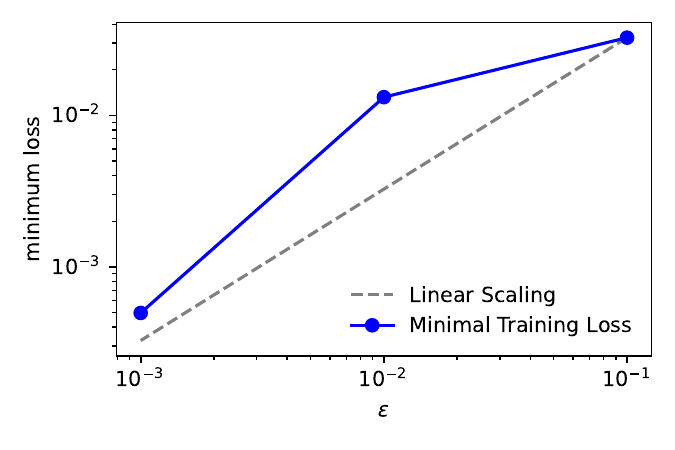}
    \caption{In the left panel, the training loss (log scale) for \Cref{ex:tikhonov}. The red dotted line corresponds to the Tikhonov threshold $\varepsilon \|\theta_{\text{exact}} \|_2^2$. In the right panel, the minimum of the training loss for different choices of the regularization parameter $\varepsilon$.}
    \label{fig:tikhonovLoss}
\end{figure}

\begin{example}\label{ex:tikhonov}
We construct a synthetic dataset using a fixed softmax transformer architecture with a single block, $\tau = 1$ and parameters $\theta_{\text{exact}}$ satisfying $\kappa (\theta_{\text{exact}}) = 62.53$. We generate $N = 10$ input sequences of length $n = 100$ in $\R^{16}$, each paired with an output sequence of length $m = 1$. A transformer with the same architecture is then trained to interpolate this data by minimizing the regularized objective \eqref{eq:tikhonov.obj}.

Since each output sequence consists of a single point, we define the loss function as
$$
f(X, Y = \{y\}) = \left(d_H(X, Y = \{ y\})\right)^2 = \max_{x\in X} \| x - y\|_2^2,
$$
which is easily verified to be continuous (and hence lower semicontinuous). We take $\kappa (\theta) = \| \theta \|_2^2$ as the $\ell^2$ norm in the vectorized parameter space $\R^P$. We train the model with Adam \cite{kingma2014adam} for $5 \cdot 10^4$ optimization steps (epochs), and regularization parameter $\varepsilon = 10^{-4}$. 

To evaluate whether the optimization is approaching a global minimizer, we compare the training loss to the theoretical upper bound $\varepsilon \|\theta_{\text{exact}} \|_2^2$ deduced from \Cref{prop:tikhonovProposition}(ii) for our choice of $\kappa$. As shown in the left panel of \Cref{fig:tikhonovLoss}, the regularized loss falls below this bound only past approximately $10^4$ epochs. We repeat the experiment for various values of the regularization parameter $\varepsilon \in \{10^{-1}, 10^{-2}, 10^{-3}\}$, recording the minimal loss achieved by the optimizer for each. The results, displayed in the right panel of \Cref{fig:tikhonovLoss}, exhibit an approximately linear decay consistent with the theoretical prediction in \Cref{prop:tikhonovProposition}. We conclude that the training process is converging either to a global minimizer, or to a local minimizer with the same $\varepsilon$-scaling for the range of $\varepsilon$ we considered.
\end{example}

\section{The transformer architecture}\label{sec:tfArchitecture}

In this section, we describe the transformer architecture used to solve \Cref{pb:exactInterp}. 
The transformer is defined as a composition of \emph{transformer blocks}, in which a feed-forward (FF) layer is followed by a self-attention (SA) layer. Both layers may or may not include a residual connection. We define these layers in detail next.

\subsection{Feed-forward layers} 
Given a width $d'\in \N$, feed-forward layers are functions $\mathbb{FF} : \mathcal{X} \to \mathcal{X}$ parametrized by $\eta \in \R$, $W \in \R^{d \times d'}$, $U \in \R^{d'\times d}$, $b\in \R^{d'}$, and an activation function $\sigma$. We fix $\sigma (x) = \max (0,x)$ as the usual ReLU activation, and call $d'\in \N$ the \textit{width} of the feed-forward layer. Then, the $i$-th component of the sequence 
\begin{equation*}
\mathbb{FF}(X) = \{ \mathrm{FF}(x_1), \dots, \mathrm{FF}(x_{\len(X)})\} 
\end{equation*}
is given by
\begin{equation*}
\FF (x) = \eta x + W\sigma(U x + b).
\end{equation*}
We stress that allowing for a tunable residual parameter $\eta$, rather than fixing $\eta = 1$, is crucial to obtain the exact interpolation results in \Cref{thm:mainResult} and \Cref{thm:mainResultSoftmax}. In particular, it allows the composition of feed-forward layers to behave as multi-layer perceptrons (MLPs) and ResNets, whose interpolation capabilities have been studied extensively \cite{hernandez2024deep,domenec2023NODES, alvarez2024interplay}.

\subsection{Self-attention layers}\label{sss:SAlayers}

Let $\mathcal{V}, \mathcal{A}$ be two subspaces of $\R^{d\times d}$. Self-attention layers are defined as functions $\mathbb{SA} :\mathcal{X} \to \mathcal{X}$ parametrized by a scalar $\rho \in \R$, a matrix $V\in \mathcal{V}$ and a matrix $A\in \mathcal{A}$ as follows. The $i$-th component of the sequence 
$$
\mathbb{SA}(X) = \{\mathrm{SA}_1(X),\dots, \mathrm{SA}_{\len(X)}(X)\}
$$
is given by
\begin{equation}
\label{eq:selfatt_a}
    \mathrm{SA}_i(X) = \rho x_i + V \sum_{\ell = 1}^{\len(X)} \pi_{i\ell} (X,A) x_\ell,
\end{equation}
where $\pi = (\pi_{i\ell}) \in [0,1]^{\len(X) \times \len(X)}$ is a weighting function parametrized by $A$ and whose output depends on the \textit{entire} sequence $X$.

The usual \textit{softmax self-attention} is obtained with $\pi = \pi^\tau$, where $\tau>0$ is called the ``temperature parameter'' and
\begin{equation}
    \label{eq:softmaxFormulation}
    \pi_{i\ell}^\tau (X,A) = \frac{\exp{\left(\frac{1}{\tau}\langle A x_i, x_\ell \rangle\right)}}{\sum_{k=1}^{\len(X)} \exp{\left(\frac{1}{\tau}\langle A x_i,  x_k \rangle\right)}}.
\end{equation}
Transformer implementations typically parameterize the attention scores using \textit{query} and \textit{key} matrices $Q,K \in \R^{h\times d}$, $h\leq d$ \cite{vaswaniAttentionAllYou2017}, with similarities given by $\langle Qx_i, Kx_\ell\rangle$. Our formulation absorbs these projections into a single matrix $A = K^\top Q$ and reduces precisely to this setting when $\mathcal{A}$ is the subspace of $d\times d$ matrices with rank $h$. 
 
By taking $\tau \to 0$ for fixed $X$ and $A$, one obtains the so-called \textit{hardmax self-attention} formulation~\cite{elfadelSoftmax1993}. We denote it by $\pi = \pi^0$, where 
\begin{equation}
    \label{eq:hardmaxFormulation}
    \pi_{i \ell }^0 (X,A) = 
    \begin{cases}
        \frac{1}{\abs{ \mathcal{C}_i(X,A) }} &\text{if } \ell \in \mathcal{C}_i(X,A),\\
         0 &\text{otherwise}, \\
    \end{cases}
\end{equation}
and 
\begin{equation*}
    \mathcal{C}_i(X,A) = \left\{ j :\;  \langle A x_i, x_j \rangle = \max_{\ell \in [\len(X)]} \langle A x_i, x_\ell \rangle \right\}
\end{equation*} 
is the set of indices $j$ where the inner product $\ip{Ax_i}{x_j}$ is maximized.
\begin{figure}
    \centering
    \includegraphics[width=0.35\linewidth]{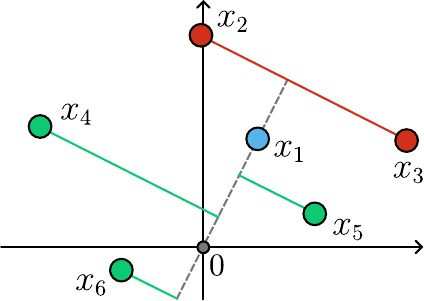}
    \caption{Geometric interpretation of \Cref{eq:hardmaxFormulation} for $i=1$ with $A=I$. Tokens $x_2$ and $x_3$ have the largest orthogonal projection onto $Ax_1 = x_1$, so $\mathcal{C}_i(X,A) = \{2,3\}$.}
    \label{fig:hm_att}
\end{figure}

Although rarely used in practice, we use the hardmax self-attention formulation in our proofs due to its simple geometric interpretation: a token $x_i$ is influenced by the tokens with the largest orthogonal projection onto the direction of $A x_i$ (cf. \Cref{fig:hm_att}). 
The smooth softmax self-attention formulation is usually preferred in applications because it allows for training with standard gradient-based algorithms.

\subsection{The transformer}\label{ss:theTransformer}
We are now ready to combine FF and SA layers into a transformer. Fix a depth $L\in \N$ and a width $d'\in \N$. Given FF layers $\{ \mathbb{FF}^k \}_{k\in [L]}$ with parameters $\{ \eta^k, W^k, U^k, b^k\}_{k\in [L]}$ and SA layers $\{\mathbb{SA}^k \}_{k\in [L]}$ with parameters $\{ \rho^k, V^k, A^k \}_{k\in [L]}$, a transformer is a map $\T : \mathcal{X} \to \mathcal{X}$ defined by
\begin{equation}\label{eq:tf-model}
\T = \left( \mathbb{SA}^L \circ \mathbb{FF}^L \right) \circ \dots \circ \left(\mathbb{SA}^1 \circ \mathbb{FF}^1 \right).
\end{equation}
For each $k\in [L]$, we call the function $\mathbb{TB}^k = \mathbb{SA}^k \circ \mathbb{FF}^k$ a transformer \textit{block} and say that $\T$ in \Cref{eq:tf-model} has $L$ blocks (see \Cref{fig:transformerArchitecture}). 
Note that, from a mathematical perspective, the order of composition of SA and FF layers is irrelevant because one can always choose the parameters of these layers to make them act like the identity map. In particular, all of our results remain true if the transformer blocks are defined as $\mathbb{TB}^k = \mathbb{FF}^k \circ \mathbb{SA}^k$, as in the work introducing the transformer \cite{vaswaniAttentionAllYou2017}.

We will refer to a transformer $\T^0$ with hardmax self-attention layers as a \textit{hardmax transformer}. Similarly, a \textit{softmax transformer}, denoted by $\T^\tau$, will be a transformer with softmax self-attention layers and temperature parameter $\tau > 0$.

In this work, we fix the $d\times d$ matrix subspaces 
\begin{align*}
    \mathcal{V} = \{\alpha I_d: \; \alpha \in \R \}, \qquad  \mathcal{A} = \{\alpha I_d + uv^\top: \; \alpha \in \R,\; u, v \in \R^d\}. 
\end{align*}
This implies that our transformer has $\mathcal{O}(d \sum_j m^j)$ parameters. Of course, our construction also works if one replaces these matrix subspaces with larger ones.

\section{Useful technical results}\label{sec:technicalResults}

In this section, we collect a number of auxiliary technical results that will be instrumental in the proof of our main results. We first revisit the analysis in \cite{alcalde2025clustering} concerning the asymptotic behavior of tokens evolving under repeated application of hardmax self-attention layers described in \eqref{eq:selfatt_a} with $\pi = \pi^0$. 
This asymptotic analysis provides a simplified setting in which we can gain insight into key features of self-attention layers. Beyond being of independent interest as an extension of previous work, the results of this analysis will also inform the design of self-attention layers in the proofs of \Cref{thm:mainResult} and \Cref{thm:mainResultSoftmax}.
We then move on to structural lemmas that describe how self-attention layers interact with feed-forward layers inside a transformer block, highlighting geometric constructions that will be used repeatedly in the sequel.

\subsection{Asymptotics of hardmax self-attention layers revisited}\label{ss:asymptoticsRevisited}

Our goal is to sharpen the characterization of the equilibria of the discrete-time dynamics
\begin{equation}\label{eq:hardmaxDynamics}
    x_i(k+1) = \rho x_i(k) + \frac{1}{|\mathcal{C}_i(X(k), A)|} V
        \sum_{\ell \in \mathcal{C}_i(X(k), A)} x_\ell(k),
\end{equation}
where $\mathcal{C}_i(X(k), A)$ denotes the hardmax-selected cluster for token $i$ at time $k$. In \cite{alcalde2025clustering}, convergence was established in the special case where $\rho = 1 - \gamma$, $V = \gamma I_d$, $\gamma \in (0,1)$, with $A \in \R^{d\times d}$ symmetric positive definite. However, the set of possible equilibria in the full-rank positive definite case is extremely rich: it ranges from a fully concentrated state to a fully dispersed one \cite{geshkovski2023emergence, alcalde2025clustering}. This motivates us to study additional structural assumptions on $A$ and on the initial configuration to enforce a desired concentration.
\begin{figure}
    \centering
    \includegraphics[width=0.9\linewidth]{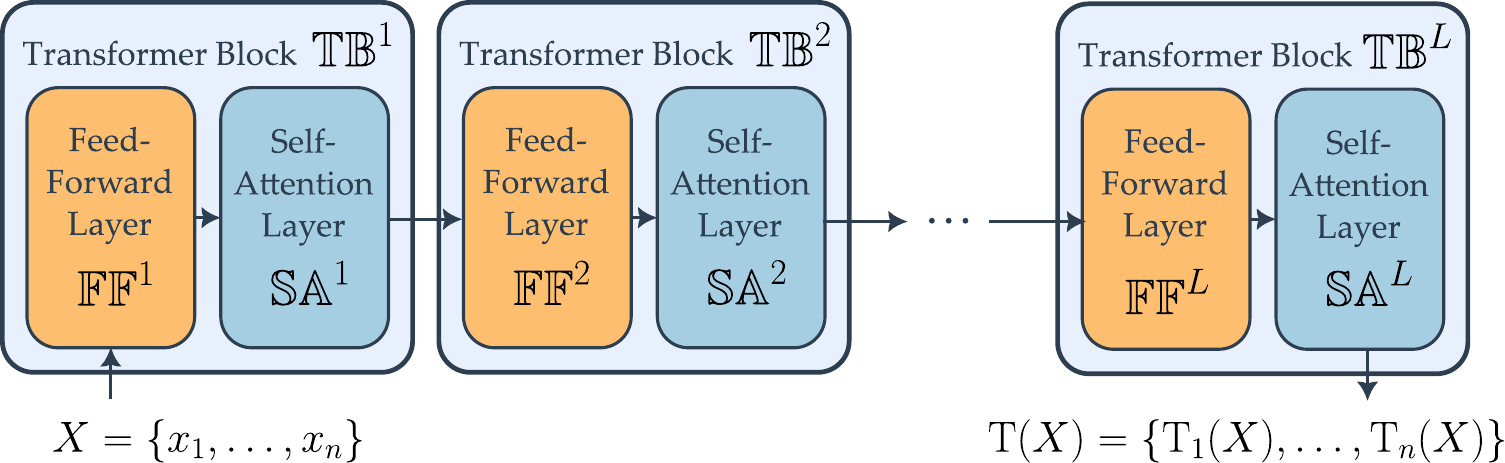}
    \caption{Schematic of the transformer architecture described in \Cref{ss:theTransformer}.}
    \label{fig:transformerArchitecture}
\end{figure}

\subsubsection*{Rank-one parameter matrix $A$}

A natural way to restrict the equilibria is to reduce the rank of $A$. We only consider the symmetric rank-one case $A = v v^\top$,  where $v \in \R^d \setminus \{0\}$.

\begin{lemma}\label{lem:rank1-asymptotics}
Given an initial configuration $\{ x_i(0) \}_{i \in [n]}$, suppose $v\in \R^d$ satisfies $\ip{v}{x_i(0)} \neq 0$ for all $i \in [n]$, and assume there exist unique indices $m,M\in [n]$ such that $\ip{v}{x_M(0)} > 0$, $\ip{v}{x_m(0)} < 0$ and 
$$
    \ip{v}{x_M(0)} > \ip{v}{x_j(0)} \quad \forall j \neq M,
    \qquad 
    \ip{v}{x_m(0)} < \ip{v}{x_j(0)} \quad \forall j \neq m.
$$
Then, the dynamics governed by \eqref{eq:hardmaxDynamics} with $\rho = 1 - \gamma$, $V= \gamma I_d$, $\gamma\in (0,1]$, and $A = vv^\top$ converge to an equilibrium consisting of $n_1$ copies of $x_m(0)$ and $n_2$ copies of $x_M(0)$ where $n_1 + n_2 = n$. For $\gamma=1$, convergence happens in one step.
\end{lemma}
\begin{proof}
If $\gamma=1$, the result follows from a direct calculation using  \eqref{eq:hardmaxDynamics} with $\rho=0$ and $V= I_d$.
If $\gamma < 1$, we start by proving that the following statements hold for all $k\geq 0$:
\begin{align}
&\argmax_{j\in[n]} \langle v, x_j(k)\rangle = \{M\}, \label{eq:(a)} \\
&\argmin_{j\in[n]} \langle v, x_j(k)\rangle = \{m\}, \label{eq:(b)} \\
&x_M(k) = x_M(0) \quad \text{and} \quad x_m(k) = x_m(0), \label{eq:(c)} \\[1em]
&\operatorname{sign}(\langle v, x_i(k)\rangle) = 
\operatorname{sign}(\langle v, x_i(0)\rangle) 
\quad \text{for all } i\in[n]. \label{eq:(d)}
\end{align}
These properties hold at $k=0$ by the assumptions. We now assume they hold at time $k$ and prove them at time $k+1$. First, we observe that
$\langle A x_i(k), x_j(k)\rangle
= \langle v, x_i(k)\rangle \langle v, x_j(k)\rangle$,
so
\[
\mathcal{C}_i(X(k),A)
=
\begin{cases}
\{M\}, & \langle v, x_i(k)\rangle > 0,\\
\{m\}, & \langle v, x_i(k)\rangle < 0.
\end{cases}
\]
This is because maximizing $\langle v, x_i(k)\rangle\langle v, x_j(k)\rangle$ over $j$ amounts to maximizing
$\langle v, x_j(k)\rangle$ if $\langle v, x_i(k)\rangle>0$ and minimizing it if $\langle v, x_i(k)\rangle<0$, and by the induction hypothesis the unique maximizer and minimizer are $M$ and $m$.

Next, we compute $x_i(k+1)$ using the update rule \eqref{eq:hardmaxDynamics} with $\rho=1-\gamma$ and $V=\gamma I_d$, and take the inner product with $v$ to find that
\begin{equation}\label{e:next-ips}
\langle v, x_i(k+1)\rangle =
\begin{cases}
(1-\gamma)\langle v, x_i(k)\rangle + \gamma \langle v, x_M(k)\rangle
& \text{if }\langle v, x_i(k)\rangle>0,\\
(1-\gamma)\langle v, x_i(k)\rangle + \gamma \langle v, x_m(k)\rangle
& \text{if }\langle v, x_i(k)\rangle<0.
\end{cases}
\end{equation}
Property \Cref{eq:(d)} at time $k$ and our assumptions that $\langle v,x_m(0)\rangle < 0 < \langle v,x_M(0)\rangle$ imply that $\langle v,x_m(k)\rangle < 0 < \langle v,x_M(k)\rangle$, so we also have $\mathcal C_M(X(k),A)=\{M\}$ and $\mathcal C_m(X(k),A)=\{m\}$. Then, since $\gamma \in (0,1]$, we conclude from \cref{e:next-ips} that properties \Cref{eq:(c)} and \Cref{eq:(d)} hold at time $k+1$. We also conclude that
\begin{align*}
\langle v,x_i(k+1)\rangle-\langle v,x_M(k)\rangle
&=(1-\gamma)\bigl(\langle v,x_i(k)\rangle-\langle v,x_M(k)\rangle\bigr) <0&\text{if }\langle v,x_i(0)\rangle>0,
\\
\langle v,x_i(k+1)\rangle-\langle v,x_m(k)\rangle
&=(1-\gamma)\bigl(\langle v,x_i(k)\rangle-\langle v,x_m(k)\rangle\bigr)>0
&\text{if }\langle v,x_i(0)\rangle<0.
\end{align*}
These inequalities imply that $\langle v,x_m(k+1)\rangle < \langle v,x_i(k+1)\rangle <\langle v,x_M(k+1)\rangle$, which in turn imply that properties \Cref{eq:(a)} and \Cref{eq:(b)} hold at time $k+1$. This concludes the induction process.

Finally, let $n_1 = \bigl|\{i:\langle v,x_i(0)\rangle<0\}\bigr|$, $n_2 = \bigl|\{i:\langle v,x_i(0)\rangle>0\}\bigr|$. For every $i$ with $\langle v,x_i(0)\rangle>0$ we have
$x_i(k+1)-x_M(0)=(1-\gamma)\bigl(x_i(k)-x_M(0)\bigr)$,
and therefore
\[
x_i(k)=x_M(0)+(1-\gamma)^k\bigl(x_i(0)-x_M(0)\bigr)\to x_M(0).
\]
Similar arguments show that $x_i(k)\to x_m(0)$ for every $i$ with $\langle v,x_i(0)\rangle<0$. 
Consequently the limiting configuration consists of $n_1$ copies of $x_m(0)$ and $n_2$ copies of $x_M(0)$ with $n_1+n_2=n$. This is easily seen to be an equilibrium for the dynamical map \cref{eq:hardmaxDynamics}.
\end{proof}
This result is advantageous if one seeks equilibria with at most two clusters, and its hypotheses hold for almost every $v\in \R^d$ for a given initial configuration $\{x_i(0)\}_{i\in [n]}$. However, when more than two distinct clusters are desired, rank-one interactions are too restrictive, and one must instead consider a different structure for $A$ together with structured initial conditions. 

\subsubsection*{Scalar multiple of the identity $A$}

We now study the case \(A=\xi I_d\) with \(\xi>0\) and dynamics \eqref{eq:hardmaxDynamics} with \(\rho=1-\gamma\), \(V=\gamma I_d\), \(\gamma\in(0,1]\). 
We show that the long-term behavior of the dynamics \eqref{eq:hardmaxDynamics} is highly sensitive to the initial configuration of tokens. 
In particular, by arranging the tokens in specific regions of $\R^d$, one can enforce convergence to qualitatively different equilibria: a fully clustered state (all tokens coincide), a fully dispersed state (all tokens remain distinct), or an intermediate configuration with a prescribed number of clusters. 
The following lemmas illustrate these three possibilities.

In what follows, we denote the positive orthant as $\R_{>0}^d$, the negative orthant as $\R_{<0}^d$, the open hypercube of side length $R>0$ as $H_R = (0,R)^d$ and the vector of all ones in $\R^d$ as $1_d$.

\begin{lemma}[Full clustering]\label{lem:fullConc}
Let $A = \xi I_d$ with $\xi > 0$, \(\rho=1-\gamma\), \(V=\gamma I_d\), $\gamma\in(0,1]$. Let the initial configuration $\{x_i(0)\}_{i\in[n]}$ be such that, for a unique index $i^\star\in [n]$
\begin{enumerate}%[leftmargin=2em, itemsep=0.6em]
    \item[i)] $x_i(0) \in H_R$ for all $i \neq i^\star$
    \item[ii)] $x_{i^\star}(0) = R \; 1_d$.
\end{enumerate}
Then, the dynamics governed by \eqref{eq:hardmaxDynamics} converge to the fully clustered equilibrium
$$
    X^\star = (x_{i^\star}(0), \dots, x_{i^\star}(0)).
$$
For $\gamma = 1$, convergence takes place in one step.
\end{lemma}
\begin{proof}
We first note that 
$\mathcal{C}_{i}(X(0), A) = \{i^\star\}$ for all $i\in [n]$, so we have
$$
    x_{i}(1) = x_i(0) + \gamma (x_{i^\star}(0) - x_i(0)).
$$
For $\gamma = 1$, this proves that $x_i(1) = x_{i^\star}(0)$, and since all tokens are fixed, $x_i(k) = x_{i^\star}(0)$ for all $i\in [n]$ and all $k\geq 1$. 
For the case $\gamma \in (0,1)$, notice that $x_i(1)\in H_R$ for all $i\neq i^\star$ because $H_R$ is convex. By induction on $k \geq 0$, one finds that
$$
    x_{i}(k+1) = x_i(k) + \gamma (x_{i^\star}(0) - x_i(k)),
$$
or equivalently by setting $\delta_i(k) \coloneqq x_i (k) - x_{i^\star} (0)$, $\delta_{i}(k+1) = (1- \gamma) \delta_i(k)$. Then, 
$$
    \| \delta_i (k+1) \| \leq (1- \gamma)^{k+1} \| \delta_i(0)\|,
$$
which implies convergence as $k\to \infty$.
\end{proof}
The next result gives a configuration for which no clustering occurs. We will use $S_R = \{x \in \R^d : \|x\| = R\}$ to denote the hypersphere of radius $R$.

\begin{lemma}[No clustering]\label{lem:noConc}
Let $A = \xi I_d$ with $\xi > 0$, \(\rho=1-\gamma\), \(V=\gamma I_d\), \(\gamma\in(0,1]\). 
Any initial configuration $\{ x_i(0)\}_{i\in [n]}$ of pairwise distinct tokens with $x_i(0) \in S_R$ for all $i\in [n]$ is an equilibrium for \eqref{eq:hardmaxDynamics}.
\end{lemma}
\begin{proof}
Since $x_i(0)\in S_R$ for all $i\in [n]$, then is holds that $\mathcal{C}_i(X(0), A) = \{ i \}$. Then, $x_i(1) = x_i(0)$ for all $i\in [n]$ and the result follows from straightforward induction. 
\end{proof}
\begin{figure}
    \centering
    \includegraphics[width=0.85\linewidth]{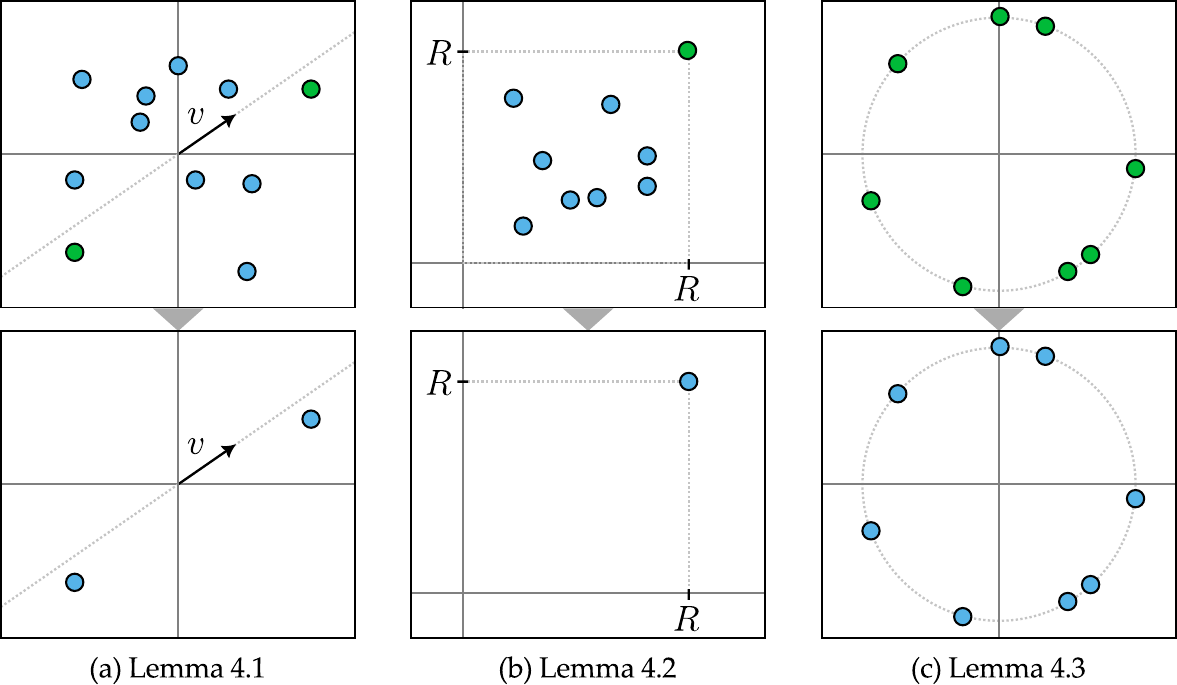}
    \caption{Initial configuration of tokens in the top row, and the asymptotic configuration in the bottom row for $(a)$ \Cref{lem:rank1-asymptotics}, $(b)$ \Cref{lem:fullConc} and $(c)$ \Cref{lem:noConc}.}
    \label{fig:asymptoticLemmasSA}
\end{figure}
An illustration of the initial configurations and their asymptotic configurations for \Cref{lem:rank1-asymptotics,lem:fullConc,lem:noConc} is shown in \Cref{fig:asymptoticLemmasSA}. Finally, we provide a result to obtain partial clustering with a desired degree of concentration.

\begin{lemma}[Partial clustering]\label{lem:partialConc}
Let $A = \xi I_d$ with $\xi > 0$, \(\rho=1-\gamma\), \(V=\gamma I_d\), \(\gamma\in(0,1]\). Let the initial configuration $\{ x_i(0)\}_{i\in [n]}$ contain pairwise distinct tokens and the subset $\mathcal{I} = \{i_1,\dots,i_m\} \subseteq [n]$ with $|\mathcal{I}|=m$ satisfy
\begin{enumerate}%[leftmargin=2em, itemsep=0.6em]
    \item[i)] $x_i(0) \in H_R$ for all $i \notin \mathcal{I}$,
    \item[ii)] $x_{i_1}(0) = R \; 1_d$,
    \item[iii)] $x_{i_j}(0) \in S_R \cap \R_{<0}^d$ for all $j > 1$.
\end{enumerate}
Then the dynamics governed by \eqref{eq:hardmaxDynamics} converge to a permutation of the equilibrium
$$
    X^\star = \big(\underbrace{x_{i_1}(0), \dots, x_{i_1}(0)}_{n-m+1\text{ times}}, 
    \, x_{i_2}(0), \dots, x_{i_m}(0)\big),
$$
that is, a partially clustered state with $m$ distinct tokens.
For $\gamma = 1$, convergence takes place in one step.
\end{lemma}
\begin{proof}
Arguing as in the proof of \Cref{lem:fullConc}, together with the fact that $x_{i_j}(0) \in \R^d_{<0}$ for all $j > 1$, we obtain $\mathcal{C}_i(X(0), A) = \{ i_1 \}$ for all $i\notin \mathcal{I}$, while arguing as in the proof of \Cref{lem:noConc}, we have that $\mathcal{C}_i(X(0), A) = \{ i \}$ for all $i\in \mathcal{I}$. Thus, 
\begin{align*}
    x_i(1) &= x_i(0) + \gamma (x_{i_1}(0) - x_i(0)) \quad \text{for all} \;\; i\notin \mathcal{I}, \\
    x_{i'}(1) &= x_{i'}(0) \quad \text{for all} \;\; i'\in \mathcal{I}.
\end{align*}
This immediately proves the result when $\gamma = 1$. For $\gamma \in (0,1)$, we run an induction argument on $k\geq 1$ as in \Cref{lem:fullConc} to obtain the result. 
\end{proof}

\subsection{Structural lemmas for transformer blocks}\label{ss:preliminaryTfLemmas}

We now turn to transformer blocks and investigate how self-attention and feed-forward layers interact, focusing on structural properties that will be essential for proving \Cref{thm:mainResult}. Our first question is whether a self-attention layer can leave a specific token $x_i$ from a sequence $X^j$ unchanged. (Throughout this section, superscripts always refer to sequence indices in the dataset, while subscripts refer to tokens indices within the sequence. In particular, we write $x^j_i$ to indicate the $i$-th token from sequence $X^j$.)
If $V = (1 - \rho) I_d$, it is not hard to see from \eqref{eq:selfatt_a} that $x_i$ will not be changed by a self-attention layer if $\mathcal{C}_i(X,A) = \{ i \}$. In such a case, we say that the token $x_i\in X$ is a \textit{leader}. 
The following lemma asserts that a token can be a leader if and only if it is extreme for the closed convex hull of the tokens in the sequence $X$. As usual, we denote by $\co (X)$ the closed convex hull of a sequence $X$ and by $\ext \co (X)$ the set of its extreme points.

\begin{lemma}\label{lem:howToLeader}
Let $X$ be a sequence of non-zero tokens. There exists $A\in \R^{d\times d}$ such that $x_i \in X$ is a leader if and only if $x_i \in \ext \co (X)$.
\end{lemma}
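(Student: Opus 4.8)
The plan is to reduce the lemma to the classical fact that, for a finite set $Z$, a point $z_p$ is \emph{exposed} by some linear functional --- i.e.\ is the unique maximiser of $\ip{w}{\cdot}$ over $Z$ for some $w$ --- precisely when $z_p\in\ext\co(Z)$, and then to observe that any such $w$ can be written as $A z_p$ for a suitable $A$. I read the statement token-by-token: for a fixed $z_p$ there is an $A\in\R^{d\times d}$ making $z_p$ a leader if and only if $z_p\in\ext\co(Z)$. I also use that the tokens of $Z$ are pairwise distinct (the only case relevant to \Cref{thm:mainResult}, since assumption~ii) is preserved by $\T_{\text{pre}}$), which is what makes ``$z_p$ is a leader'' equivalent to the strict inequalities $\ip{A z_p}{z_p}>\ip{A z_p}{z_j}$ for all $j\neq p$.

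For the forward implication, given an $A$ for which $z_p$ is a leader I would set $w:=A z_p$ and note that $z_p$ strictly maximises $\ip{w}{\cdot}$ over $Z$; expanding any $x\in\co(Z)$ as a convex combination of tokens then shows $z_p$ is in fact the unique maximiser of $\ip{w}{\cdot}$ over all of $\co(Z)$, and a unique maximiser of a linear functional over a convex set is extreme, so $z_p\in\ext\co(Z)$.

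For the converse, from $z_p\in\ext\co(Z)$ and the finiteness of $Z$, the point $z_p$ is a vertex of the polytope $\co(Z)$, hence (using distinctness of tokens) $z_p\notin\co(Z\setminus\{z_p\})$; this last set is compact and convex, so the separating hyperplane theorem provides $w$ with $\ip{w}{z_p}>\ip{w}{x}$ for all $x\in\co(Z\setminus\{z_p\})$, in particular $\ip{w}{z_p}>\ip{w}{z_j}$ for every $j\neq p$. Since every token is non-zero, $A:=\|z_p\|^{-2}\,w\,z_p^{\top}$ is well defined and satisfies $A z_p=w$, whence $\ip{A z_p}{z_p}>\ip{A z_p}{z_j}$ for all $j\neq p$, i.e.\ $\mathcal{C}_p(Z,A)=\{p\}$ and $z_p$ is a leader.

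The lemma is short, and the only step needing a little care is ``vertex of $\co(Z)\Rightarrow$ exposed'', which genuinely uses that $Z$ is finite --- for general convex bodies an extreme point need not be exposed. I would also flag that the exposing functional can satisfy $\ip{w}{z_p}\le 0$, so at this stage $A$ cannot always be taken in the symmetric rank-one form $vv^{\top}$; the non-symmetric rank-one matrix $w z_p^{\top}$ suffices here, and the $vv^{\top}$ structure needed downstream in \Cref{lem:chooseLeader} is recovered only after a feed-forward layer repositions the relevant token so that $\ip{w}{z_p}>0$ can be arranged.
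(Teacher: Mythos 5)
Your proof is correct and follows essentially the same plan as the paper's: both directions reduce to the equivalence, for the polytope $\co(Z)$, between $z_p$ being extreme and $z_p$ being the unique maximiser of some linear functional over $\co(Z)$, and the converse is realised via the separating hyperplane theorem applied to $z_p$ and $\co(Z\setminus\{z_p\})$. The one place you genuinely diverge is the construction of $A$ once the exposing functional $w$ is in hand. You take $A=\|z_p\|^{-2}\,w\,z_p^{\top}$, a generically non-symmetric rank-one matrix satisfying $Az_p=w$, which sidesteps the sign of $\ip{w}{z_p}$ entirely. The paper instead perturbs $w$ (there called $v$) so that $\ip{v}{z_p}\neq 0$ — possible because $z_p\neq 0$ and the separating inequalities are strict — and then sets $A=vv^{\top}$ if $\ip{v}{z_p}>0$ and $A=-vv^{\top}$ if $\ip{v}{z_p}<0$. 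So your caveat is correct that the positive-semidefinite form $vv^{\top}$ alone does not always work, but the paper shows that $\pm vv^{\top}$ does, so a \emph{symmetric} rank-one $A$ is already attainable at this stage. Both constructions invoke $z_p\neq 0$, you to normalise $z_p^{\top}$ and the paper to justify the perturbation. Your route is marginally shorter and avoids the case split; the paper's preserves the symmetric low-rank structure that it advertises and uses later ($A=vv^{\top}$ in the clustering step), and, as you correctly note, the positive sign is then arranged in \Cref{lem:chooseLeader} by a feed-forward shift.
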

\begin{proof}
Suppose $A\in \R^{d\times d}$ is such that $x_i$ is a leader. Then,
\begin{equation}
    \langle A x_i, x_\ell \rangle < \langle A x_i, x_i\rangle \quad \forall \ell \neq i.
\end{equation}
It is easy to verify that this implies that $x_i$ is the unique maximizer of the linear function $x \mapsto \langle Ax_i, x\rangle$ over $\co(X)$. Since linear functions on convex sets are always maximized by at least one extreme point of the set, we conclude that $x_i \in \ext \co (X)$. This proves the ``only if'' part of the lemma.

For the reverse implication, we fix $x_i\in\ext\co (X)$ and explicitly construct $A$ for which $x_i$ is a leader. Since $x_i\in\ext\co (X)$, then $x_i\notin \co (X \setminus \{ x_i \})$, which is a closed and convex set. By the hyperplane separation theorem, there exists a non-zero vector $v\in \R^d$ and a constant $\alpha\in \R$ such that $\ip{v}{x_i} > \alpha$ and $\ip{v}{x_\ell} < \alpha$ for all $\ell\neq i$. Since the maps $v \mapsto \ip{v}{x_\ell}$ are continuous for all $\ell \in [\len (X)]$, we can assume without loss of generality that $\ip{v}{x_i} \neq 0$. 
Then, if $\ip{v}{x_i} > 0$ we set $A = vv^\top$ and verify that
\begin{equation}
    \begin{aligned}
    \langle A x_i, x_i\rangle &= \ip{v}{x_i} \ip{v}{x_i} \\
    &> \ip{v}{x_i}\ip{v}{x_\ell} \\
    &= \langle Ax_i, x_\ell\rangle \quad \forall \ell \neq i,
    \end{aligned}
\end{equation}
because $\ip{v}{x_i} > \alpha > \ip{v}{x_\ell}$. This yields $\mathcal{C}_i(X,A) = \{ i \}$, as desired. If, instead, $\ip{v}{x_i} < 0$, then we set $A = -vv^\top$ and obtain $\mathcal{C}_i(X,A) = \{ i \}$ because
\begin{align}
    \langle A x_i, x_i\rangle &= -\ip{v}{x_i} \ip{v}{x_i} \nonumber \\ 
    &> -\ip{v}{x_i}\ip{v}{x_\ell} \\
    &= \langle Ax_i, x_\ell\rangle \quad \forall \ell \neq i,
\end{align}
which finishes the proof.
\end{proof}
The next two lemmas establish that all tokens in a sequence $X^j$ can be attracted by a single token in that sequence. This is a crucial result for the proof of \Cref{thm:mainResult} and is inspired by the asymptotic dynamics for $A = v v^\top$ in \Cref{lem:rank1-asymptotics}. Moreover, the lemmas allow prescribing the attracting token $x_{i^\star}^{j^\star}$ in a sequence $X^{j^\star}$, which is also used in the proof of \Cref{thm:mainResult} iteratively to separate all sequences one by one.

\begin{lemma}\label{lem:auxLemmaSingleLeader}
Given sequences $X^1, \dots, X^N$ of non-zero tokens, fix $j^\star\in [N]$ and $i^\star\in [\len (X^{j^\star})]$ such that $x_{i^\star}^{j^\star} \in \ext\co(X^{j^\star})$. There exists $v\in \R^d$ such that:
\begin{enumerate}%[leftmargin=2em, itemsep=0.6em]
    \item[i)] $\ip{v}{x_{i^\star}^{j^\star}} \neq 0$.
    \item[ii)] 
    For every $j \in [N]$, there exists $i_j\in [\len (X^j)]$ such that $\ip{v}{x_\ell^j} < \ip{v}{x_{i_j}^j}$ for all $\ell \in [\len (X^j)] \setminus \{i_j\}$.
    \item[iii)] $i_{j^\star} = i^\star$.
\end{enumerate}
\end{lemma}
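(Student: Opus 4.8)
The plan is to build $v$ as a small, generic perturbation of a separating vector for the single sequence $Z^q$. Since $z_p^q\in\ext\co(Z^q)$, the hyperplane separation step used in the proof of \Cref{lem:howToLeader}, applied to $Z^q$ alone, produces $w\in\R^d$ with $\ip{w}{z_p^q}>\ip{w}{z_\ell^q}$ for every $\ell\neq p$, so that $z_p^q$ is the unique maximizer of $\ell\mapsto\ip{w}{z_\ell^q}$ over $[\len(Z^q)]$. This $w$ need not have a unique maximizer over the other sequences, nor satisfy $\ip{w}{z_p^q}\neq0$, so the task is to correct it without spoiling the property it already enjoys on $Z^q$.

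\emph{Vectors to avoid.} For each $j\in[N]$ and each pair $\ell\neq m$ of indices in $[\len(Z^j)]$ with $z_\ell^j\neq z_m^j$, the set $\{v:\ip{v}{z_\ell^j}=\ip{v}{z_m^j}\}$ is a hyperplane in $\R^d$; likewise $\{v:\ip{v}{z_p^q}=0\}$ is a proper hyperplane, because $z_p^q\neq0$. Let $\mathcal{B}$ be the union of all these finitely many hyperplanes. If $v\notin\mathcal{B}$, then $\ip{v}{z_p^q}\neq0$, which is i), and for each $j$ the function $\ell\mapsto\ip{v}{z_\ell^j}$ takes distinct values on distinct tokens and hence attains its maximum at a single token, which is ii).

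\emph{Quantitative perturbation.} It remains to pick $v\notin\mathcal{B}$ close enough to $w$ that $z_p^q$ is still the maximizer over $Z^q$, so that $i_q=p$ as required by iii). Set $\delta=\min_{\ell\neq p}\bigl(\ip{w}{z_p^q}-\ip{w}{z_\ell^q}\bigr)>0$ and $R=\max_{j,\ell}\|z_\ell^j\|>0$ (if $\len(Z^q)=1$ put $\delta=1$; the estimate below is then vacuous). For any $v$ with $\|v-w\|<\delta/(2R+1)$ and any $\ell\neq p$, the bound $\|z_p^q-z_\ell^q\|\leq 2R$ gives $\ip{v}{z_p^q}-\ip{v}{z_\ell^q}\geq\delta-\|v-w\|\,\|z_p^q-z_\ell^q\|>0$, so $z_p^q$ stays the unique maximizer over $Z^q$. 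Since $\mathcal{B}$ is a finite union of proper affine subspaces, its complement is dense, so the open ball $B(w,\delta/(2R+1))$ contains some $v\notin\mathcal{B}$; this $v$ satisfies i), ii) and iii).

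The one delicate point is reconciling the three requirements simultaneously: the genericity step is precisely what secures i) and ii), but on its own it could move $v$ off the extremality of $z_p^q$ inside $Z^q$ and break iii); confining the perturbation to a ball whose radius is controlled by the gap $\delta$ is exactly what makes genericity compatible with iii). Everything else is routine.
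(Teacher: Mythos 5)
Your proof is correct, and it takes a genuinely different route from the paper. The paper proceeds by induction on $N$: for $N=1$ it appeals to the hyperplane-separation construction of \Cref{lem:howToLeader}, and for the induction step it adds a small multiple $\varepsilon u$ of a new separating vector to $v_{N-1}$ to break ties within $Z^N$, with a case split on whether $v_{N-1}$ already works for $Z^N$. You instead make a single genericity argument: fix $w$ separating $z_p^q$ within $Z^q$, collect all the "bad" hyperplanes $\{v:\ip{v}{z_\ell^j-z_m^j}=0\}$ and $\{v:\ip{v}{z_p^q}=0\}$ into a finite union $\mathcal{B}$, and note that a small ball around $w$ must meet the complement of $\mathcal{B}$; the radius control by $\delta/(2R+1)$ preserves iii). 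Your version is shorter and avoids the paper's case analysis, at the mild cost of being non-constructive in the final choice of $v$ (you invoke density of $\R^d\setminus\mathcal{B}$ rather than exhibiting a specific perturbation direction as the paper does). Both proofs implicitly use, as does the lemma statement when read literally, that distinct token indices within a sequence carry distinct tokens — in your argument this is where "distinct values on distinct tokens" passes to "unique maximizing index"; this matches the paper's own implicit use of it, so it is not a gap relative to the source.
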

\begin{proof}
We assume $j^\star = 1$ without loss of generality, and proceed by induction on $N$. For $N=1$, a vector $v_1\in \R^d$ such that conditions \textit{i)--iii)} hold can be chosen using the same arguments in \Cref{lem:howToLeader}. 

Next, we assume that $v_{N-1}\in \R^d$ is such that \textit{i)--iii)} hold for the first $N-1$ sequences, and construct $v_{N}\in \R^d$ such that they also hold when the last sequence $X^N$ is considered. Because $j^\star = 1$, only condition \textit{ii)} can fail for $v = v_{N-1}$ when $X^N$ is added to the problem. If condition \textit{ii)} holds also for $j=N$, we can simply set $v_{N} = v_{N-1}$.

Otherwise, suppose condition \textit{ii)} does not hold for $j=N$. In this case, the quantity $\ip{v}{x_\ell^{N}}$ is maximized at $s>1$ tokens, which we may take to be $x_1^{N}, \dots, x_s^{N}$ without loss of generality after reordering the sequence if necessary. We may also assume without loss of generality that $x_1^{N}$ is extreme for $\co(x_1^{N}, \dots, x_s^{N})$. Then, by the hyperplane separation theorem, there exists $u\in\R^d$ such that $\ip{u}{x_r^{N}} < \ip{u}{x_1^{N}}$ for all $r \in \{2,\dots,s\}$.
Set $v_{N} = v_{N-1} + \varepsilon u$ for some $\varepsilon>0$ to be specified below. Then, we obtain
\begin{equation*}
\begin{aligned}
    \ip{v_{N}}{x_r^{N}} &= \ip{v_{N-1}}{x_r^{N}} + \varepsilon \ip{u}{x_r^{N}}  \\
    &< \ip{v_{N-1}}{x_1^{N}} + \varepsilon \ip{u}{x_1^{N}}  \\
    &= \ip{v_{N}}{x_1^{N}}
\end{aligned}
\end{equation*}
for every $r \in \{2,\dots, s\}$.
We now fix $\varepsilon$ small enough that $\ip{v_{N}}{x_r^{N}} < \ip{v_{N}}{x_1^{N}}$ also for $r\in\{s+1,\ldots,\len(X^N)\}$ and that conditions \textit{i)--iii)} remain true for sequence indices $j\in[N-1]$. This is possible because the maps $v\mapsto \ip{v}{x}$ are continuous.
\end{proof}
Next, we look at how feed-forward layers and the hardmax operation combine. Our goal is to show that alternating feed-forward and self-attention layers can cause all tokens of a sequence to align with a unique leader.

\begin{lemma}\label{lem:chooseLeader}
Fix $j^\star \in [N]$ and $i^\star \in [\len (X^{j^\star})]$ such that $x_{i^\star}^{j^\star} \in \ext\co(X^{j^\star})$. There exist parameters $\{ \eta, W, U, b\}$ of a feed-forward layer $\mathbb{FF}$ with width $d' = 1$, a vector $v\in \R^d$, and indices $\{i_j\}_{j\in [N]}$ with $i_{j^\star} = i^\star$ such that, for all $\{\ell, j\}\in [\len (X^j)]\times [N]$,
\begin{equation}\label{eq:chooseLeaderLemma}
   \FF(x_\ell^j) \neq 0 
   \quad\text{and}\quad
    \C_\ell(\mathbb{FF}(X^j), vv^\top) = \{ i_j \}.
\end{equation}
\end{lemma}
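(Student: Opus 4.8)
The plan is to combine Lemma~\ref{lem:auxLemmaSingleLeader} with a suitable feed-forward layer, using the latter to guarantee that no token is sent to the origin while preserving the separating structure produced by the former. First I would apply Lemma~\ref{lem:auxLemmaSingleLeader} to the sequences $Z^1,\dots,Z^N$ with the given $q$ and $p$, obtaining a vector $v\in\R^d$ and indices $\{i_j\}_{j\in[N]}$ with $i_q=p$ such that, for every $j$, the linear functional $z\mapsto\ip{v}{z}$ is uniquely maximized over the tokens of $Z^j$ at $z_{i_j}^j$, and $\ip{v}{z_p^q}\neq 0$. The point is that the clustering identity $\C_\ell(\FF(Z^j),vv^\top)=\{i_j\}$ is equivalent to asking that $z\mapsto\ip{v}{\FF_{i_j}(z)}\ip{v}{z}$-type comparisons hold; more precisely, writing $w_\ell^j:=\FF(z_\ell^j)$, we need $\ip{vv^\top w_{i_j}^j}{w_{i_j}^j}>\ip{vv^\top w_{i_j}^j}{w_\ell^j}$ for all $\ell\neq i_j$, i.e.\ (after dividing by the nonzero scalar $\ip{v}{w_{i_j}^j}$, with a sign caveat) a strict inequality $\ip{v}{w_{i_j}^j}>\ip{v}{w_\ell^j}$ or its reverse. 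So the real content is to choose $\FF$ so that it (a) does not vanish anywhere on the finite token set, and (b) does not destroy the strict maximization property along $v$.

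The cleanest way to arrange (a) and (b) simultaneously is to let the feed-forward layer act as a translation in a direction parallel to some level set of $\ip{v}{\cdot}$. Concretely, I would pick $U=0$, $b=\bm 1\in\R^{d'}$ with $d'=1$, and $W=w\in\R^d$; then $\FF(z_\ell^j)=z_\ell^j+w$ for every token. Choosing $w$ with $\ip{v}{w}=0$ (possible since $d\ge 2$, so $v^\perp\neq\{0\}$) preserves every inequality $\ip{v}{z_\ell^j}<\ip{v}{z_{i_j}^j}$ verbatim, hence keeps $\C_\ell(\FF(Z^j),vv^\top)=\{i_j\}$ exactly as in Lemma~\ref{lem:auxLemmaSingleLeader} — one only has to re-run the short computation at the end of Lemma~\ref{lem:howToLeader} with $A=vv^\top$ or $A=-vv^\top$ according to the sign of $\ip{v}{z_{i_j}^j}$, noting that this sign is unchanged by the translation because $\ip{v}{z_{i_j}^j+w}=\ip{v}{z_{i_j}^j}$. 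For (a), the set of tokens $\{z_\ell^j+w : j\in[N],\ \ell\in[\len(Z^j)]\}$ avoids $0$ iff $-w$ is not among the finitely many vectors $z_\ell^j$; since we only need $w\in v^\perp$ and this is a subspace of dimension $\ge 1$, and we must avoid at most $\sum_j\len(Z^j)$ bad values, almost every small $w\in v^\perp$ works. I would simply fix any such $w$.

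The verification that $\C_\ell(\FF(Z^j),vv^\top)=\{i_j\}$ then splits into the two sign cases exactly as in Lemma~\ref{lem:howToLeader}: if $\ip{v}{z_{i_j}^j}>0$ one uses $A=vv^\top$ and the chain $\ip{vv^\top w_{i_j}^j}{w_{i_j}^j}=\ip{v}{w_{i_j}^j}^2>\ip{v}{w_{i_j}^j}\ip{v}{w_\ell^j}=\ip{vv^\top w_{i_j}^j}{w_\ell^j}$; if $\ip{v}{z_{i_j}^j}<0$ one uses $A=-vv^\top$ and the analogous chain with a global sign flip. But there is a subtlety I should flag as the main obstacle: condition \textit{i)} of Lemma~\ref{lem:auxLemmaSingleLeader} only gives $\ip{v}{z_p^q}\neq 0$ for the \emph{prescribed} sequence $Z^q$ and its leader; it does \emph{not} assert $\ip{v}{z_{i_j}^j}\neq 0$ for the other leaders $i_j$, $j\neq q$. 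If some $\ip{v}{z_{i_j}^j}=0$, then $vv^\top z_{i_j}^j=0$ and $\C_\ell(\FF(Z^j),vv^\top)$ degenerates to all of $[\len(Z^j)]$, which is not what we want. The fix is to perturb $v$ once more: after obtaining $v$ from Lemma~\ref{lem:auxLemmaSingleLeader}, replace it by $v+\delta v'$ for a generic $v'$ and $\delta>0$ small, using continuity to keep all the strict maximization inequalities and the single nonvanishing condition $\ip{v}{z_p^q}\neq 0$, while generically forcing $\ip{v}{z_{i_j}^j}\neq 0$ for every $j$ (a finite set of codimension-one conditions to avoid). Equivalently, one could strengthen Lemma~\ref{lem:auxLemmaSingleLeader} to output a $v$ with $\ip{v}{z_{i_j}^j}\neq 0$ for all $j$ — the same perturbation argument already in its proof handles this at no extra cost. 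With that in place, the chosen $\FF$ together with $v$ and $\{i_j\}$ satisfies \eqref{eq:chooseLeaderLemma}, and $i_q=p$ is inherited from condition \textit{iii)} of Lemma~\ref{lem:auxLemmaSingleLeader}. I expect writing out the two sign cases and the perturbation bookkeeping to be the only genuinely fiddly parts; everything else is a direct translation of already-proved facts.
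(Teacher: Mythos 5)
Your proposal misreads the condition that must be established. You verify that the leader attends only to itself, i.e.\ $\C_{i_j}(\FF(Z^j),vv^\top)=\{i_j\}$, by checking $\ip{vv^\top w_{i_j}^j}{w_{i_j}^j}>\ip{vv^\top w_{i_j}^j}{w_\ell^j}$ for $\ell\neq i_j$. But the lemma requires $\C_\ell(\FF(Z^j),vv^\top)=\{i_j\}$ for \emph{every} $\ell\in[\len(Z^j)]$, i.e.\ every token must attend only to the leader (this is what lets the clustering step collapse the whole sequence). For general $\ell$ the relevant quantity is $\ip{v}{w_\ell^j}\ip{v}{w_r^j}$ maximized over $r$: when $\ip{v}{w_\ell^j}>0$ the argmax is $\{i_j\}$ as desired, but when $\ip{v}{w_\ell^j}<0$ the argmax becomes the \emph{argmin} of $r\mapsto\ip{v}{w_r^j}$, and when $\ip{v}{w_\ell^j}=0$ the set degenerates to all of $[\len(Z^j)]$. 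So it is not enough that the leaders have nonzero inner product with $v$; one needs $\ip{v}{\FF(z_\ell^j)}$ to have the \emph{same, fixed sign for all $\ell$ and all $j$}. Your translation $w\in v^\perp$ leaves every $\ip{v}{z_\ell^j}$ unchanged, and these quantities will in general take both signs, so the proposal fails at this point. The perturbation of $v$ you add only addresses the nonvanishing of $\ip{v}{z_{i_j}^j}$ for leaders, which is a much weaker (and insufficient) condition, and it cannot generically fix the sign problem because it would have to flip the sign of infinitely many negative inner products without disturbing the positive ones.

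The paper's proof makes exactly the opposite choice: the feed-forward layer shifts every token \emph{along} $v$, by $b\,v/\|v\|^2$ with $b=\max_{k,r}\abs{\ip{v}{z_r^k}}+\varepsilon$, which forces $\ip{v}{\FF(z_\ell^j)}\geq\varepsilon>0$ uniformly in $\ell,j$. With positivity in hand, $\C_\ell(\FF(Z^j),vv^\top)=\argmax_r\ip{v}{\FF(z_\ell^j)}\ip{v}{\FF(z_r^j)}=\argmax_r\ip{v}{\FF(z_r^j)}=\{i_j\}$ for every $\ell$, and the nonvanishing condition $\FF(z_\ell^j)\neq 0$ follows from the same uniform lower bound. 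You would need to replace your orthogonal shift by a shift of this kind; once you do, the rest of your scaffolding (invoking \Cref{lem:auxLemmaSingleLeader}, inheriting $i_q=p$ from its property \textit{iii)}) is in line with the paper.
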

\begin{proof}
Choose $v\in\R^d$ as in \Cref{lem:auxLemmaSingleLeader}. We now choose the parameters of the FF layer such that \Cref{eq:chooseLeaderLemma} holds. For this, we first observe that adding the same constant vector to every token of every sequence does not change properties \textit{ii)--iii)} of \Cref{lem:auxLemmaSingleLeader}. We then set the parameters of $\mathbb{FF}$ as $\eta =1$, $W = v / \| v \|^2$, $U = 0$ and 
\begin{equation*}
    b = \max_{k\in [N]} \max_{r\in [\len(X^k)]}| \ip{v}{x_r^k}| + \varepsilon,
\end{equation*} 
where $\varepsilon > 0$. For this choice, $\sigma(Ux + b) = \sigma (b) = b$, so that $\FF(x_\ell^j) = x_\ell^j + b/ {\| v \|^2} v$. It is clear that, for all $\varepsilon$ sufficiently small, $\FF(x_\ell^j) \neq 0$ for all $\{\ell, j\}\in [\len(X^j)]\times [N]$, yielding the first condition in \Cref{eq:chooseLeaderLemma}. For the second one, note that
\begin{align}\label{eq:lemPosTerms}
    \ip{v}{\FF({x}_\ell^j)} &= \ip{v}{x_\ell^j} +  \frac{\ip{v}{v}}{\|v\|^2} b \geq \varepsilon > 0 
\end{align}
for all $\{ \ell, j\} \in [\len(X^j)] \times [N]$. With this inequality we find that $i\in \C_\ell (\mathbb{FF}(X^j), vv^\top)$ if and only if
\begin{equation*}
\begin{aligned}
    i &\overset{\phantom{\text{by \Cref{eq:lemPosTerms}}}}{\in}\argmax_{r \in [\len (X^j)]} \left[ \ip{v}{\FF(x^j_\ell)} \ip{v}{\FF(x_r^j)} \right] \\
    &\overset{\text{by \Cref{eq:lemPosTerms}}}{=} \argmax_{r \in [\len (X^j)]}  \ip{v}{\FF(x_r^j)}.
\end{aligned}
\end{equation*}
This fact, combined with property \textit{ii)} of \Cref{lem:auxLemmaSingleLeader} implies that $\C_\ell(\mathbb{FF}(X^j), vv^\top) = \{ i_j \}$ for all $\ell \in [\len(X^j)]$ and all $j\in [N]$. Finally, $i_{j^\star} = i^\star$ by property \textit{iii)} of \Cref{lem:auxLemmaSingleLeader}. 
\end{proof}
Finally, the following lemma is related to the behavior of feed-forward layers and their ability to approximate ``hat functions'', moving exactly one token to an arbitrary position, while keeping the rest fixed. This is a distinctive trait of ReLU feed-forward layers and key to arranging tokens in the initial configurations of \Cref{lem:fullConc,lem:partialConc,lem:noConc}. 

\begin{lemma}\label{lem:hatFFClassifier}
    Fix pairwise distinct $x_1, \dots, x_m \in \R^d$, $i\in [m]$, and $y\in \R^d$. There exist parameters of a $\FF$ layer of width $d' = 3$ such that $\FF(x_i) = y$ and $\FF(x_j) = x_j$ for all $j\neq i$.
\end{lemma}
\begin{proof}
Fix the width of the FF layer as $d' = 3$ and parametrize it by
\begin{equation*}
\mathrm{FF}(x) = x + \begin{bmatrix}
\vert & \vert & \vert \\
{w} & -2{w} & {w} \\
\vert & \vert & \vert
\end{bmatrix} \sigma\left( \begin{bmatrix}
\rule{0.4cm}{0.4pt} & {u^\top} & \rule{0.4cm}{0.4pt} \\
\rule{0.4cm}{0.4pt} & {u^\top} & \rule{0.4cm}{0.4pt} \\
\rule{0.4cm}{0.4pt} & {u^\top} & \rule{0.4cm}{0.4pt}
\end{bmatrix} x + \begin{bmatrix}
    {\beta} - {\gamma} \\
    {\beta} \\
    {\beta} + {\gamma}
\end{bmatrix}\right).
\end{equation*}
Let $u\in \R^d$ be an arbitrary non-zero direction satisfying
\begin{equation*}
     \ip{u}{x_j - x_i} \neq 0 \quad \forall j\neq i.
\end{equation*}
Such a vector exists because we have a finite collection of points. Now, fix $\beta = - \ip{u}{x_i}$, and define the hyperplane
\begin{equation*}
    \mathcal{H} = \{ x \in \R^d \; : \ip{u}{x - x_i} = 0 \}.
\end{equation*}
Moreover, define the ``tube''
\begin{equation*}
    \mathcal{H}^{\gamma} = \{ x \in \R^d \; : |\ip{u}{x - x_i}| \leq \gamma \}.
\end{equation*}
By construction, $x_i \in \mathcal{H}$ and every $x_j$, $j\neq i$ satisfies $x_j\notin \mathcal{H}$. Then, for $\gamma > 0$ small enough, $x_j \notin \mathcal{H}^\gamma$. This already yields that $\FF (x_j) = x_j$ for all $x_j$ with $j\neq i$. Finally, we choose $w = (y - x_i) / \gamma$ to obtain $\FF (x_i) = x_i + w \gamma = y$, as desired.
\end{proof}
Finally, we concatenate the two FF layers constructed in \Cref{lem:hatFFClassifier,lem:chooseLeader} into a single FF layer that implements their composition.

\begin{lemma}\label{lem:ff-combo-w4}
    Let $\FF^{1}$ and $\FF^{2}$ be the feed-forward layers from \Cref{lem:hatFFClassifier,lem:chooseLeader}, respectively. There exists a feed-forward layer $\FF$ of width $d'=4$ such that $\FF = \FF^{1}\circ \FF^{2}$.
\end{lemma}
\begin{proof}
    Let $b>0$ and $v$ be chosen as in \Cref{lem:chooseLeader}, and set $c=b \|v\|^{-2} v$. The FF layer from \Cref{lem:chooseLeader} satisfies $\FF^{1}(x)=x+c$. Then, the width-$4$ FF layer
    \begin{equation*}
    \mathrm{FF}(x) = x + \begin{bmatrix}
    \vert & \vert & \vert & \vert \\
    {w} & -2{w} & {w} & c \\
    \vert & \vert & \vert & \vert 
    \end{bmatrix} \sigma\left( \begin{bmatrix}
    \rule{0.4cm}{0.4pt} & {u^\top} & \rule{0.4cm}{0.4pt} \\
    \rule{0.4cm}{0.4pt} & {u^\top} & \rule{0.4cm}{0.4pt} \\
    \rule{0.4cm}{0.4pt} & {u^\top} & \rule{0.4cm}{0.4pt} \\
    \rule{0.4cm}{0.4pt} & {0^\top} & \rule{0.4cm}{0.4pt} \\
    \end{bmatrix} x + \begin{bmatrix}
        {\beta} - {\gamma} \\
        {\beta} \\
        {\beta} + {\gamma} \\
        b
    \end{bmatrix}\right)
    \end{equation*}
    satisfies $\FF(x)
    =\FF^{2}(x)+c
    =\FF^{1}(\FF^{2}(x))$, as claimed.
\end{proof}

\section{Proof of \texorpdfstring{\Cref{thm:mainResult}}{Theorem \ref{thm:mainResult}}}\label{sec:proofOfMainResult}

We now prove \Cref{thm:mainResult} by prescribing explicit transformer parameters. 
Although the construction is technical, it follows an intuitive strategy that also sheds light on how transformers operate. To emphasize this intuition, we first present the proof strategy in the context of a simple dataset, postponing the full construction to \Cref{ss:separationStep,ss:leaderSelectionStep,ss:collapseStep,ss:interpolationStep,ss:complexityEstimate}.

\subsection{Proof strategy}
Our proof proceeds in four steps, illustrated in \Cref{fig:thm_sketch} for a simple sequence interpolation problem in $\R^2$ with $N=3$ input sequences $X^1$, $X^2$ and $X^3$ of length $n^1 = 5$, $n^2 = 3$ and $n^3 = 3$ and output sequences of length $m^1 = 1, m^2 = 1$, and $m^3 = 2$. The initial configuration of the sequences and the desired output sequences are shown in panel $(a)$ of \Cref{fig:thm_sketch}.
\begin{figure}
    \centering
    \includegraphics[width=1\linewidth]{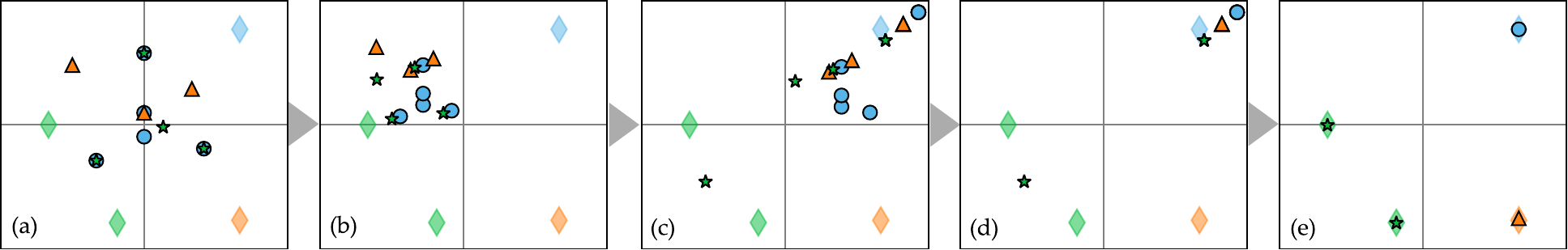}
    \vspace{-3mm}
    \caption{Proof sketch of \Cref{thm:mainResult} applied to $N=3$ input sequences in $\R^2$ (denoted with blue circles, orange triangles and green stars, respectively), with output sequences of length $m^1 = 1, m^2 = 1$, and $m^3 = 2$ (denoted with diamonds colored accordingly). The tokens of the initial sequences are shown in panel $(a)$. Panels $(b)$--$(e)$ show the tokens of each sequence after the separation, leader selection, collapse and interpolation steps of the proof. Note that, after the collapse step in panel $(d)$, only tokens selected as leaders are visible. After all steps, sequences match their outputs, as shown in panel $(e)$.}
    \label{fig:thm_sketch}
\end{figure}

\begin{enumerate}%[leftmargin=2em, itemsep=0.6em]
    \item \textbf{Separation.} In this step, we use $\mathcal{O}(N)$ transformer blocks to separate overlapping tokens from different input sequences (see panel $(b)$ in \Cref{fig:thm_sketch}). This step requires alternating feed-forward and self-attention layers.

    \item \textbf{Leader selection.} In this step, we move all tokens to the positive orthant. Then, we select $m^j$ points from each sequence to act as ``leaders'', and position them carefully using $\mathcal{O}(\sum_{j=1}^N m^j)$ transformer blocks (see panel $(c)$ in \Cref{fig:thm_sketch}).

    \item \textbf{Collapse.} In this step, all tokens not selected to be leaders collapse onto the leaders by leveraging the clustering property of a single transformer block (see panel $(d)$ in \Cref{fig:thm_sketch}). 

    \item \textbf{Interpolation.} In this final step, the collapsed sequences are driven sequentially to their outputs (see panel $(e)$ in \Cref{fig:thm_sketch}), which requires at most $\mathcal{O}(\sum_{j=1}^N m^j)$ transformer blocks.
\end{enumerate}
The next subsections make these steps rigorous to prove \Cref{thm:mainResult}. Throughout, we use the crucial fact that the ReLU is zero in a half-space, allowing us to move some tokens while leaving the other ones fixed.

\subsection{Separation}\label{ss:separationStep}

We ensure the $N$ sequences can be made pairwise disjoint through the action of suitably chosen transformer blocks. 
The resulting transformer $\T_{\text{sep}}$ has at most $1 + 2(N-1)$ blocks and is built using the following lemma. In what follows, $B_\delta(x)$ denotes the open ball of radius $\delta > 0$ centered at $x\in \R^d$. 

\begin{lemma}\label{lem:splitOverlappingSequences}
Given pairwise distinct sequences $X^1,\ldots,X^N$, there exists a hardmax transformer $\T_\textnormal{sep}$ of width $d'\leq 4$ and at most $1 + 2(N-1)$ blocks such that
$$
\T_{\textnormal{sep}}({X}^j) \cap \T_\textnormal{sep}({X}^{j'}) = \emptyset \quad \text{for all} \; j\neq j'\in [N].
$$
\end{lemma}
\begin{proof}
We begin by ensuring that all input tokens are mapped to non-zero vectors, which can be easily obtained with a single transformer block globally shifting all tokens (setting $U = 0$, $b=1$ and $W = w \in \R^d$ for any non-zero vector in the FF layer, and $\rho = 1$, $V = 0$, and $A=0$ in the SA layer).
After this initial block, we prove the result constructively by induction on $N$ assuming non-zero tokens, using FF layer constructed as in \Cref{lem:chooseLeader}. Throughout, $x_i^j$ denotes the $i$-th token of sequence $X^j$. 

\noindent $\boldsymbol{N = 2}$. By relabeling tokens if necessary, we may assume without loss of generality that $x_1^2 \in X^2 \setminus X^1$, because the sequences are assumed pairwise distinct. We consider two cases, each of which can be handled using one transformer block.
\begin{itemize}%[leftmargin=2em, itemsep=0.6em]
    \item \textit{Case 1: $x_1^2 \in \ext\co (X^2)$.}
    In this case, the extreme token $x_1^2 \in X^2$ allows for sequence separation using a single transformer block of width $d'=1$. For the $\FF$ layer of this block, we apply \Cref{lem:chooseLeader} with $i^\star = 1$, $j^\star = 2$ to construct a feed-forward layer $\mathbb{FF}$ and a vector $v\in \R^d$ such that, for some $i_1 \in [m^1]$,
    \begin{subequations}\label{eq:lemSeparation}
        \begin{align}
        \C_r (\mathbb{FF} (X^1), v v^\top) = \{ i_1 \} \quad \forall r \in [m^1], \\
        \C_{\ell} (\mathbb{FF} (X^2), v v^\top) = \{ 1 \} \quad \forall \ell \in [m^2].
    \end{align}
    \end{subequations}
    Note that $x_{i_1}^1\in \ext \co (X^1)$ by \Cref{lem:howToLeader}, and $x_{i_1}^1\neq x_1^2$ by construction. For the SA layer, instead, we take $0 < \alpha < 1$ to be specified precisely below and set $\rho = 1 -\alpha$, $V = \alpha I$ and $A=v v^\top$ to obtain
    \begin{align*}
        \SA_{r}(\mathbb{FF} (X^1)) &= {x}_r^1 + \alpha ({x}_{i_1}^1 - {x}_r^1) + \alpha b \frac{v}{\| v \|^2} \quad \forall r \in [m^1] \\
        \SA_{\ell}(\mathbb{FF} (X^2)) &=  {x}_{\ell}^2 + \alpha ({x}_1^2 -{x}_{\ell}^2)  + \alpha b \frac{v}{\| v \|^2} \quad \forall \ell \in [m^2].
    \end{align*}
    (The last term in these two equations comes from the shift introduced by the FF layer.) These quantities are different for all $(r, \ell ) \in [m^1] \times [m^2]$ if and only if
    \begin{equation}\label{eq:lemConditionOnAlpha}
    (1 - \alpha) (x_\ell^2 - x_r^1) + \alpha (x_1^2 - x_{i_1}^1) \neq 0 \quad (r, \ell ) \in [m^1] \times [m^2].  
    \end{equation}
    This condition fails if and only if there exists one pair $( r, \ell )$ and $c_{r \ell} \in \R$ such that $x_\ell^2 - x_r^1 = c_{r \ell} (x_1^2 - x_{i_1}^1)$ and $\alpha$ is the unique solution to $\frac{\alpha}{\alpha - 1} = c_{r \ell}$. This means there are at most finitely many choices of $\alpha$ for which \eqref{eq:lemConditionOnAlpha} fails, so it suffices to pick $\alpha$ not from this set. 
    \begin{figure}
        \centering
        \includegraphics[width=1\linewidth]{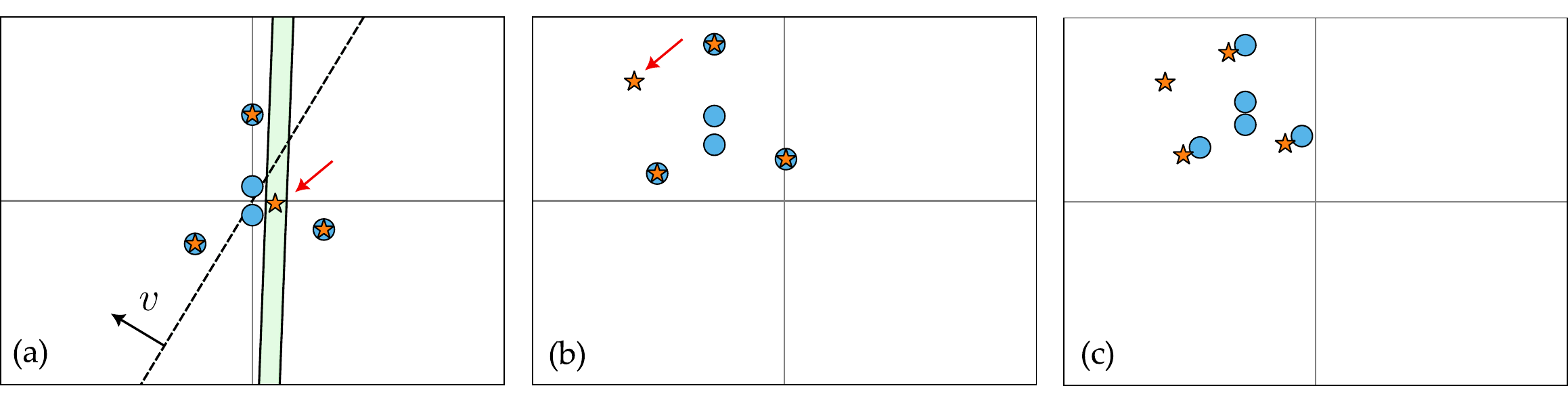}
        \vspace{-3mm}
        \caption{
        Illustration of Case 2 in \Cref{lem:splitOverlappingSequences} for $N=2$ sequences in $\R^2$. The initial tokens in $X^1$ (circles) and $X^2$ (stars) are shown in panel $(a)$, as well as the $\FF$ layer that globally shifts by $v\in \R^2$ and moves the distinct token from $X^2$ (marked with a red arrow). In panel $(b)$, we show the result of applying the resulting FF layer of width $d'=4$. Note how the shifted token is now extreme for $\mathbb{FF}(X^2)$. In panel $(c)$, the sequences have been separated by the $\SA$ layer.}
        \label{fig:case2}
    \end{figure}

    \item \textit{Case 2: $x_1^2 \notin \ext\co (X^2)$.} 
    In this case, we first need to move $x_1^2$ while keeping all other tokens fixed, so it becomes extreme for $\co (X^2)$. We will use a single transformer block of width $d' = 4$.
    
    Fix any $y \notin (\co (X^2) \cup X^1)$. By \Cref{lem:hatFFClassifier}, there exists a width $d' = 3$ feed-forward layer $\Tilde{\FF}$ such that $\Tilde{\FF} (x_1^2) = y$, and $\Tilde{\FF}(x) = x$ for all other tokens $x \neq x_1^2$. Using \Cref{lem:ff-combo-w4} to concatenate $\Tilde{\FF}$ to the width $d'= 1$ feed-forward layer of \Cref{lem:chooseLeader}, we obtain a width $d' = 4$ feed-forward layer $\mathbb{FF}$ such that \Cref{eq:lemSeparation} holds and
    \begin{equation}\label{eq:lem.separation.aux}
        \FF(x_1^2) \in \ext \co \mathbb{FF}(X^2).
    \end{equation}
    This inclusion is true because the extreme points of a set are preserved when shifting the set.
    We can then construct the SA layer in this transformer block following the same construction as in Case 1.
    The action of the overall transformer block is illustrated in \Cref{fig:case2}.
\end{itemize}

\noindent \textbf{Induction step.} Assume there exists a hardmax transformer $\T^{N-1}$ with width $d'\leq 4$ and at most $1 + 2(N-2)$ blocks such that 
$$
    \T^{N-1}(X^j) \cap \T^{N-1}(X^{j'}) = \emptyset \quad \forall j\neq j'\in [N-1].
$$
Set $\Hat{X}^j \coloneqq \T^{N-1}(X^j)$ for all $j$ to ease the notation. Here and throughout, we write $\T_i(X)$ for the $i$-th token of the transformed sequence $\T(X) = \{ \T_1(X), \dots, \T_{\len(X)}(X) \}$. Fix without loss of generality $\Hat{x}_1^N \in \Hat{X}^N$. Our next objective is to build a hardmax transformer $\T'$ with at most two blocks such that
\begin{equation}\label{eq:proofSeparationLemma}
    \T' (\Hat{X}^N) \cap \T' (\Hat{X}^{j}) = \emptyset \quad \text{for all} \; j\in [N-1].
\end{equation}

If $\{ \Hat{x}_1^N \} \cap \Hat{X}^j = \emptyset$ for all $j\in [N - 1]$, then we apply exactly the same argument as in Case 1 or Case 2 above for $N = 2$, depending on whether $\Hat{x}_1^N \in \ext \co (\Hat{X}^N)$ or $\Hat{x}_1^N \notin \ext \co (\Hat{X}^N)$. This already yields \Cref{eq:proofSeparationLemma}.

If $\{ \Hat{x}_1^N \} \cap \Hat{X}^j \neq \emptyset$, instead, we can use the induction hypothesis to obtain (relabeling tokens and the first $N-1$ sequences if necessary) $\{\Hat{x}_1^N \} \cap \Hat{X}^{N-1} = \{ \Hat{x}_1^{N-1} \} \subset \Hat{X}^{N-1}$. Since the sequences are assumed pairwise distinct, one of two possibilities arises.
\begin{enumerate}%[leftmargin=2em, itemsep=0.6em]
    \item[i)] There exists $\Hat{x}_2^N \in \Hat{X}^N$ such that $\Hat{x}_2^N \notin \Hat{X}^{N-1}$. In this case, we can apply the same construction of Case 1 or Case 2 to the token $\Hat{x}_2^N$, yielding \Cref{eq:proofSeparationLemma} with a single-block transformer $\T'$ of width $d' \leq 4$.

    \item[ii)] There exists $\Hat{x}_2^{N-1} \in \Hat{X}^{N-1}$ such that $\Hat{x}_2^{N-1} \notin \Hat{X}^N$. In this case, we first apply Case 1 or Case 2 to $\Hat{x}_2^{N-1}$ to obtain a single-block transformer $\Tilde{\T}$ such that $\Tilde{\T}_1(\Hat{X}^N) \neq \Tilde{\T}_1(\Hat{X}^{N-1})$. Then, apply again Case 1 or Case 2 to the token $\Tilde{\T}_1(\Hat{X}^N)$ to obtain a transformer $\T'$ with two blocks in total and width $d'\leq 4$ satisfying \Cref{eq:proofSeparationLemma}.
\end{enumerate}
Finally, by the induction hypothesis, there exists $\delta_1 > 0$ such that the balls $\{ B_{\delta_1} (\hat{x}_\ell^j )\}_{\ell, j}$ are disjoint for all $\ell \in [\len (X^j)]$ and all $j\in [N-1]$. By taking the parameter $\alpha$ in the blocks of $\T'$ small enough, we can ensure also that the balls $\{ B_{\delta_1} (\T_\ell'(\hat{X}^j)) \}_{\ell, j}$ remain disjoint for all $\ell \in [\len (X^j)]$ and all $j\in [N-1]$, so that $\T' (\Hat{X}^N) \cap \T' (\Hat{X}^{j}) = \emptyset$ for all $j\in [N - 1]$.

In summary, since $\Hat{X}^j = \T^{N-1}(X^j)$ for all $j\in[N]$, we have constructed a transformer $\T_\text{sep} \coloneqq \T' \circ \T^{N-1}$ with at most $1 + 2(N-1)$ blocks and width $d'\leq 4$ achieving $\T_\text{sep}({X}^j) \cap \T_\text{sep}({X}^{j'}) = \emptyset$ for all $j\neq j'\in [N]$. The proof is complete.
\end{proof}

\subsection{Leader selection}\label{ss:leaderSelectionStep}

We slightly abuse notation and set $\Hat{X} \coloneqq \T_{\text{sep}}(X)$. The overall goal of this step is to select $m^j$ tokens in each sequence to act as leaders, and position them in a suitable way such that, in the next step below, we will be able to collapse the remaining $n^j - m^j$ tokens and obtain a sequence with $m^j$ unique tokens, as required by  \Cref{pb:exactInterp}. We proceed as follows.

First, we use a FF layer to globally shift all tokens to $\R^d_{>0}$. Since we have a finite number of tokens, there exists a bounded hypercube
$$
\mathcal{B} = [a_1, a_2]^d \subset \R^d_{>0}
$$
such that setting $\eta = 1$, $W = 0$, $U = u \in \R^d$, and $b = 0$ yields $\FF(x) = x + u \in \mathcal{B}$ for some $0 < a_1 < a_2$.
Next, denote 
$$
\mathcal{P} = \{ x \in \R^d \, : x_i > a_2 \;\; \forall i\in [d]\}.
$$ 
Fix a vector $w\in \R^d_{<0}$ and denote by
$$
Q = \{ \lambda w \; : \lambda \in \R \}
$$
the line spanned by $w$. Moreover, choose distinct points 
$\{x^1, \dots, x^N\} \subset \mathcal{P} \cap Q$ and a hypersphere $S_R$ of arbitrary radius $R > 0$ (see \Cref{fig:leader.selection.sketch} for an illustration of all objects).
\begin{figure}
    \centering
    \includegraphics[width=0.9\linewidth]{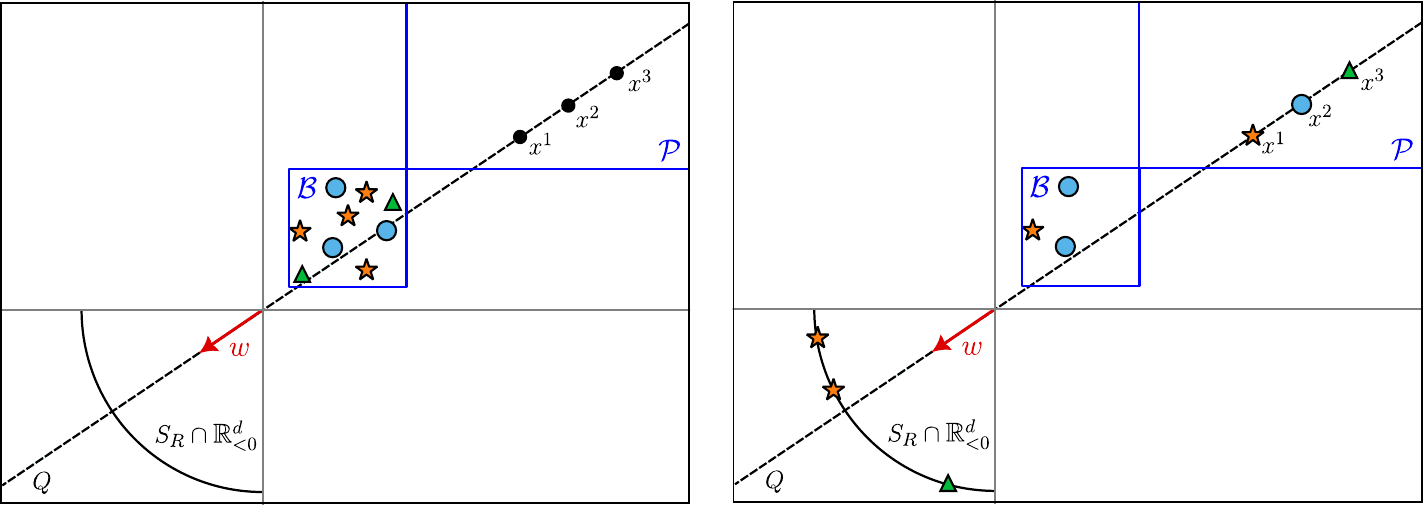}
    \caption{Sketch of the geometric construction in $\R^2$ used in the leader selection step of the proof of \Cref{thm:mainResult}. In the left panel, all tokens from the $N = 3$ sequences $X^1$ (stars), $X^2$ (circles) and $X^3$ (triangles) lie in the hypercube $\mathcal{B}$. In the right panel, after the leader selection step, $m^1 = 3$, $m^2 = 1$ and $m^3 = 2$ leaders have been selected for each sequence, respectively. For each sequence, one leader is placed in $\mathcal{P}$ and the remaining ones (if any) are placed on $S_R \cap \R^d_{<0}$.}
    \label{fig:leader.selection.sketch}
\end{figure}
For every $j\in [N]$, we use \Cref{lem:hatFFClassifier} iteratively to obtain:
\begin{itemize}%[leftmargin=2em, itemsep=0.6em]
    \item $N$ feed-forward layers that map $\hat{x}_1^j \in \hat{X}^j$ to $x^j$ for each $j\in [N]$ in turn, while keeping all other tokens fixed.
    \item $\sum_{j = 1}^{N} (m^j -1)$ feed-forward layers that map $\hat{x}_2^j, \dots, \hat{x}_{m^j}^j \in \hat{X}^j$ to distinct points in $S_R \cap \R^d_{<0}$ for all $j\in [N]$ in turn, while keeping all other tokens fixed and ensuring sequences remain disjoint.
\end{itemize}
The transformer $\T_{\text{lead}}$ implementing this construction has width $d'\leq 3$ and $1 + \sum_{j=1}^N m^j$ blocks in which the self-attention layers act as the identity ($\rho = 1$, $V = 0$, $A = 0$).

\subsection{Collapse}\label{ss:collapseStep}

In this step, we use the clustering properties of the SA layer to collapse all tokens in the box $\mathcal{B}$ to $x^1, \dots, x^N$, while keeping the rest fixed. This is obtained by \Cref{lem:partialConc} with a single-block transformer $\T_{\text{col}}$ where the FF layer acts as the identity ($\eta = 1$, $W = 0$, $U = 0$, $b = 0$) and the parameters in the SA layer are set to $\rho = 0$, $V = I_d$, and $A = I_d$. We thus conclude that $(\T_\text{col} \circ \T_\text{lead} \circ \T_\text{sep}) (X^j)$ has $m^j$ unique elements for all $j\in [N]$.

\subsection{Interpolation}\label{ss:interpolationStep}

The transformer $\T_\text{col} \circ \T_\text{lead} \circ \T_\text{sep}$ constructed so far has collapsed the $n^j$ distinct tokens of each input sequence $X^j$ into $m^j$ distinct tokens, as required by the output sequence $Y^j$. The remaining task is to move each token iteratively to its desired output. For every $j \in [N]$, we use \Cref{lem:hatFFClassifier} to obtain $m^j$ FF layers that sequentially map the $m^j$ unique tokens of the collapsed sequence $(\T_\text{col} \circ \T_\text{lead} \circ \T_\text{sep}) (X^j)$ to the $m^j$ outputs $y_1^j, \dots, y_{m^j}^j$. This can be implemented by a transformer $\T_{\text{int}}$ with at most $\sum_{j=1}^N m^j$ blocks, where each block contains one FF layer provided by \Cref{lem:hatFFClassifier} together with a SA layer acting as the identity ($\rho = 1$, $V = 0$, $A = 0$). 

\subsection{Overall construction and complexity estimate}\label{ss:complexityEstimate}

Combining the transformers described in the previous steps results in a transformer satisfying 
$$
(\T_{\text{int}} \circ \T_{\text{col}}\circ \T_{\text{lead}} \circ \T_{\text{sep}})(X^j) = Y^j
$$
for all $j\in [N]$, as desired.

Next, we count the number of blocks in our transformer. The maximum width $d' = 4$ was required in the separation step, and at most 
$$
\underbrace{1 + 2(N-1)\vphantom{\sum_{j=1}^N}}_{\T_\text{sep}}
+ \underbrace{1 + \sum_{j=1}^N m^j}_{\T_\text{lead}}
+ \underbrace{1\vphantom{\sum_{j=1}^N}}_{\T_\text{col}}
+ \underbrace{\sum_{j=1}^N m^j}_{\T_\text{int}}
= 2\sum_{j=1}^N m^j + 2N + 1
$$
blocks were needed in total. 
More explicitly, ignoring layers that act as the identity, the construction is a sequence of applications of the key technical lemmas from \Cref{sec:technicalResults}. 
The separation step uses one FF layer to shift all tokens, followed by $2(N-1)$ transformer blocks constructed with \Cref{lem:splitOverlappingSequences} to make the $N$ sequences pairwise disjoint. 
In the leader selection step, we shift all tokens to the positive orthant using one FF layer, then apply $\sum_{j=1}^N m^j$ FF layers of the type provided by \Cref{lem:hatFFClassifier} to position the leaders. 
The collapse step applies one self-attention layer of the type constructed in \Cref{lem:partialConc} to collapse tokens onto the leaders. 
Lastly, in the interpolation step we apply $\sum_{j=1}^N m^j$ FF layers provided by \Cref{lem:hatFFClassifier} to move the collapsed tokens to the desired outputs.

Finally, the parameters in each block have been chosen as vectors in $\R^d$, rank-one matrices, multiples of the identity, or constants, resulting in $\mathcal{O}(d)$ parameters per block. The total number of parameters is therefore $\mathcal{O}(d \sum_{j=1}^N m^j)$.

\section{Proof of \texorpdfstring{\Cref{thm:mainResultSoftmax}}{Theorem \ref{thm:mainResultSoftmax}}}\label{sec:proofOfMainResultSoftmax}

We adapt the steps in \Cref{sec:proofOfMainResult} to prove \Cref{thm:mainResultSoftmax}. As discussed in \Cref{ss:mainResults}, the result is not simply obtained by continuous perturbation arguments, as these would only yield an approximate interpolation result. Indeed, a key limitation of the softmax self-attention formulation in \Cref{eq:softmaxFormulation} is that, for non-zero $A\in \R^{d\times d}$, even without residual ($\rho = 0$) it always yields a strict combination of tokens, not allowing for exact collapse\footnote{Setting $A = 0$ and $\rho = 0$, all tokens collapse exactly to their average, fundamentally limiting the output sequences length to $1$ (we point, however, to \cite{geshkovski2024measure} for a workaround using $\rho = 1$).}. Therefore, we adapt the collapse step to include an extra transformer with $\mathcal{O}(N)$ blocks that utilizes non-residual feed-forward layers with ReLU activation to iteratively collapse small neighborhoods of tokens.

In what follows, we make the previous discussion rigorous, devoting a section to each step of the proof that needs to be adapted; namely, the separation, leader selection, and collapse steps. 

\subsection{Softmax separation}

The technical construction in \Cref{lem:splitOverlappingSequences} needs to be adapted to ensure that there exists $\tau_0 > 0$ such that $\T^\tau_{\text{sep}}(X^j) \cap \T^\tau_{\text{sep}}(X^{j'}) = \emptyset$ for all $j\neq j'\in [N]$ and all $\tau \leq \tau_0$.

We will solely invoke continuity arguments for this part, and restrict the presentation to a single sequence $X$ ($N =1$) and a single self-attention layer. Since we are working with a finite set of sequences, the continuity arguments are easily extended to general $N \in \N$.

Fix a finite set $X = \{ x_i \}_{i\in [n]}$ and $A \in \R^{d\times d}$. Denote for every $i\in [n]$
\begin{align*}
    \mathcal{M}_i^0 (X, A) &\coloneqq \frac{1}{|\mathcal{C}_i(X, A)|} \sum_{j\in \mathcal{C}_i(X, A)}x_j, \\
    \mathcal{M}_i^\tau (X, A)  &\coloneqq  \sum_{j}\frac{\exp{\left(\frac{1}{\tau}\langle A x_i, x_j \rangle\right)}}{\sum_{k} \exp{\left(\frac{1}{\tau}\langle A x_i, x_k \rangle\right)}} x_j.
\end{align*}
As discussed in \Cref{sss:SAlayers}, for fixed $X$ and $A$, the softmax function $\pi^\tau$ and the hardmax function $\pi^0$ satisfy \cite{elfadelSoftmax1993}
$$
\lim_{\tau \to 0} \pi_{i\ell}^\tau (X, A) = \pi_{i \ell}^0 (X, A) \quad \text{for all} \; i,\ell \in [n].
$$
Equivalently, for every $\delta > 0$ there exists $\tau_0(\delta) > 0$ such that
$$
    \| \mathcal{M}_i^\tau (X, A) -\mathcal{M}_i^0 (X, A)  \| < \delta \quad \forall \tau \leq \tau_0(\delta)
$$
This means that, when comparing softmax and hardmax SA layers,
\begin{align*}
    \SA^{\tau}_i(X) &= \rho x_i + \alpha \mathcal{M}^\tau_i (X,A), \\
    \SA^{0}_i(X) &= \rho x_i + \alpha \mathcal{M}^0_i(X, A),
\end{align*}
we obtain the estimate
\begin{equation}\label{eq:sm-hm-estimate}
    \|\SA^{0}_i(X) - \SA^{\tau}_i(X) \| = | \alpha | \| \mathcal{M}_i^0(X,A) - \mathcal{M}_i^\tau(X,A) \| < |\alpha| \delta
\end{equation}
for all sufficiently small $\tau$. Moreover, every hardmax SA layer in the proof of \Cref{lem:splitOverlappingSequences} ensures that tokens remain pairwise distinct. In particular, for some $\varepsilon > 0$ and all $i\neq i'\in [n]$ we have
$$
    \| \SA^{0}_i(X) - \SA^{0}_{i'}(X) \| > \varepsilon.
$$
Using \Cref{eq:sm-hm-estimate}, we can similarly obtain for softmax SA layers that satisfy
\begin{align}\nonumber
    &\| \SA^{\tau}_i(X) - \SA^{\tau}_{i'}(X) \| \\ \nonumber
    &\geq \| \SA^{0}_i(X) - \SA^{0}_{i'}(X) \| - \| \SA^{\tau}_i(X) - \SA^{0}_i(X) \| - \|\SA^{0}_{i'}(X) - \SA^{\tau}_{i'}(X) \| \\ \nonumber
    &\geq \varepsilon - 2 |\alpha|\delta.
\end{align}
Therefore, choosing $\delta < \frac{\varepsilon}{2 |\alpha|}$, for $\tau$ small enough in each softmax SA layer, we obtain that tokens remain distinct with the same construction. 

\subsection{Softmax leader selection}

We proceed exactly as in the leader selection step in the proof of \Cref{thm:mainResult}, with the extra requirement that there is ``enough separation'' between the desired positions of the $m^j$ tokens in each sequence that will become leaders in the next step.  Denote $\hat{X} \coloneqq \T_\text{sep}^\tau (X)$. For a given $\delta > 0$, we choose $w \in \R^d$, $\| w \| = 1$, and $R > 0$ so the desired positions for the leaders in the next step $\{ q_i^j \}_{i\in [m^j], j\in [N]}$ satisfy
\begin{equation}\label{eq:deltaSepProjections}
    \left\| w \left( \ip{w}{q_i^j - q_{i'}^{j'}}\right) \right\| \geq 2 \delta
\end{equation}
for all $i\neq i'$ and all $j\neq j'$. Indeed, this is clearly possible for those points $q_i^j \in \R_{>0}^d \cap Q$. Additionally, for every distinct pair of points $q_i^j, q_{i'}^{j'} \in \R_{<0}^d \cap S_R$, we have
\begin{equation}\label{eq:softmaxLeaderSelect}
\left\| w \left( \ip{w}{q_i^j - q_{i'}^{j'}}\right) \right\| = R \left| \ip{w}{\hat{q}_i^j  - \hat{q}_{i'}^{j'}}\right|,
\end{equation}
where $\hat{q}_i^j = {q_i^j} / R$ and $\hat{q}_{i'}^{j'} = {q_{i'}^{j'}} / R$. We first ensure that the right-hand side of \eqref{eq:softmaxLeaderSelect} is non-zero by choosing $w\in\R^d$ outside a finite set of directions. Then, we fix 
$$
\zeta = \min_{i\neq i', j\neq j'} \left| \ip{w}{\hat{q}_i^j - \hat{q}_{i'}^{j'}}\right|
$$
and choose $R > {2\delta} / {\zeta}$ to satisfy \Cref{eq:deltaSepProjections}.
\begin{figure}
    \centering
    \includegraphics[width = 0.9\textwidth]{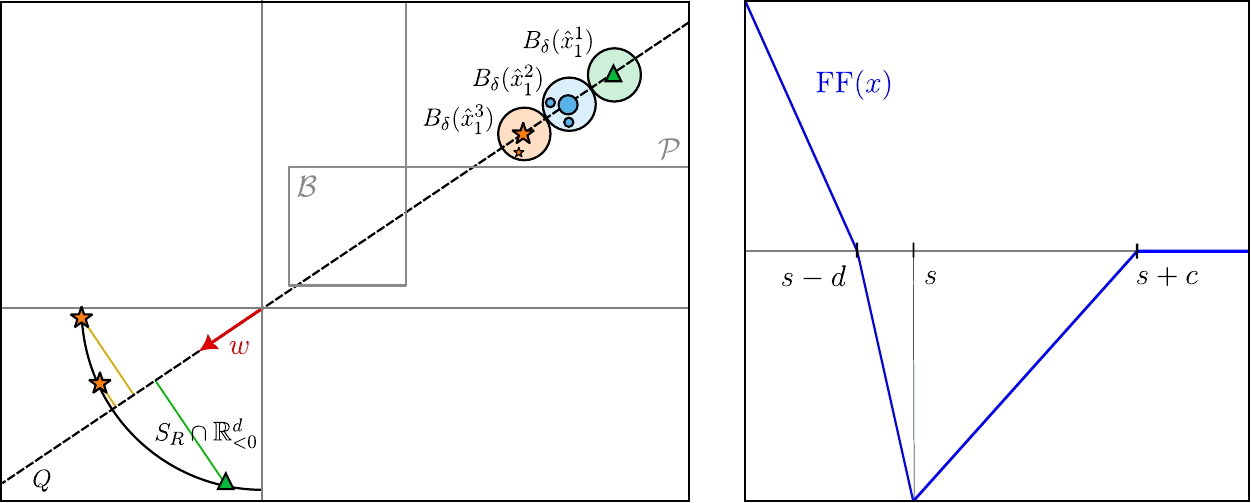}
    \caption{Geometric construction of the softmax collapse step for the proof of \Cref{thm:mainResultSoftmax}. In the left panel, the result of the first substep, with all tokens in a ball of radius $\delta$ of the leaders. In the right panel, the $\FF$ layer used in the second substep.}
    \label{fig:FFcollapseStep}
\end{figure}

\subsection{Softmax collapse} 

The collapse step in the softmax setting requires additional care, since it must be carried out in two substeps. We abuse notation and denote $\hat{X}^j \coloneqq (\T_\text{lead}^\tau \circ \T_\text{sep}^\tau) (X^j)$.

In the first substep, for sufficiently small $\tau > 0$ and thanks to \Cref{eq:deltaSepProjections}, a softmax SA layer with $\rho = 0$, $V = I_d$ and $A = I_d$ moves all tokens of each sequence to a ball of radius $\delta$ of the $m^j$ leaders of that sequence, arranged in the previous leader selection step. The resulting configuration is shown in the left panel of \Cref{fig:FFcollapseStep} for a low-dimensional example.

In the second substep, the $N$ balls $B_\delta (\hat{x}_1^1), \dots,B_\delta (\hat{x}_1^N) \in \R^d_{>0}$ are collapsed sequentially, using non-residual feed-forward layers with ReLU activation as follows. Recall that, by construction, we have that the $N$ balls are disjoint and so are their projections onto the line 
$$
Q = \{ \lambda w \; : \lambda \in \R \}.
$$ 
We relabel the tokens $\hat{x}_1^1, \dots, \hat{x}_1^N$ if necessary, so that they are ordered from farthest ($\hat{x}_1^1$) to closest ($\hat{x}_1^N$) to the origin. Now, fix a non-residual ($\eta = 0$) FF layer of width $d'=3$ as follows (see right panel of \Cref{fig:FFcollapseStep}):
$$
\FF(x) =
\begin{bmatrix}
\vert & \vert & \vert \\
-w & \left(\tfrac{c}{d} + 1\right) w & -\tfrac{c}{2d} w \\
\vert & \vert & \vert
\end{bmatrix}
\sigma \left(
\begin{bmatrix}
\rule{0.4cm}{0.4pt} & - w^\top & \rule{0.4cm}{0.4pt} \\
\rule{0.4cm}{0.4pt} & - w^\top & \rule{0.4cm}{0.4pt} \\
\rule{0.4cm}{0.4pt} & - w^\top & \rule{0.4cm}{0.4pt} \\
\end{bmatrix} x +
\begin{bmatrix}
s + c \\
s \\
s - d
\end{bmatrix}
\right).
$$
Recall that $w\in \R^d_{<0}$ is fixed in the leader selection step. Now, choose $s\in \R$ and $c, d > 0$ such that the interval $[s - d, s + c] \subset \R$ contains only the projections of the balls $B_\delta (\hat{x}_1^{j})$ onto $Q$ for all $j > 1$. Then, after applying the previously described $\FF$ layer:
\begin{itemize}
    \item the ball $B_\delta (\hat{x}_1^1)$ collapses to $0$,
    \item the balls $B_\delta (\hat{x}_1^j)$ are mapped to disjoint segments $I^j \subset \R^d_{>0} \cap Q$ for all $j > 1$,
    \item the remaining $m^j - 1$ leaders per sequence $\hat{x}_i^j \in \R^d_{<0}$, $i\neq 1$, are projected to distinct (as ensured by \Cref{eq:deltaSepProjections}) positive multiples of $w$, thus remaining in $\R^d_{<0}$.
\end{itemize}
This process can now be iterated for the next ball of radius $\delta$ (now collapsed to a segment of $\R_{>0}^d \cap Q$) furthest away from the origin, which corresponds to $I^N$ because the order has been flipped. At each iteration, the ball collapsed to the origin in the previous step is shifted along $w$, preventing overlaps with subsequent segments.

After $N$ iterations, we obtain a transformer $\T^\tau_{\text{col}}$ of width $d'=3$ and $N$ blocks, whose SA layers act as the identity ($\rho=1$, $V=0$, $A=0$). This transformer collapses each input sequence to exactly $m^j$ distinct tokens, after which the interpolation construction of \Cref{thm:mainResult} applies verbatim.

\begin{remark}\label{rem:HMvsSMconstruction}
    Our construction only requires that, within each self-attention layer, $\tau$ be sufficiently small to achieve the desired behavior. This does not imply that subsequent layers produce identical outputs across models, as illustrated in \Cref{fig:softmaxRemark}:
    \begin{itemize}%[leftmargin=2em, itemsep=0.6em]
        \item In the hardmax transformer, the first block of the separation step may produce a token configuration that necessitates another separation step.
        \item In the softmax transformer, the same block may instead lead directly to a configuration suitable for the collapse step.
    \end{itemize}
    Thus, while intermediate token dynamics may differ between hardmax and softmax transformers, the overall construction remains valid and our results hold in both cases.
\end{remark}
\begin{figure}
    \centering
    \includegraphics[width=0.6\linewidth]{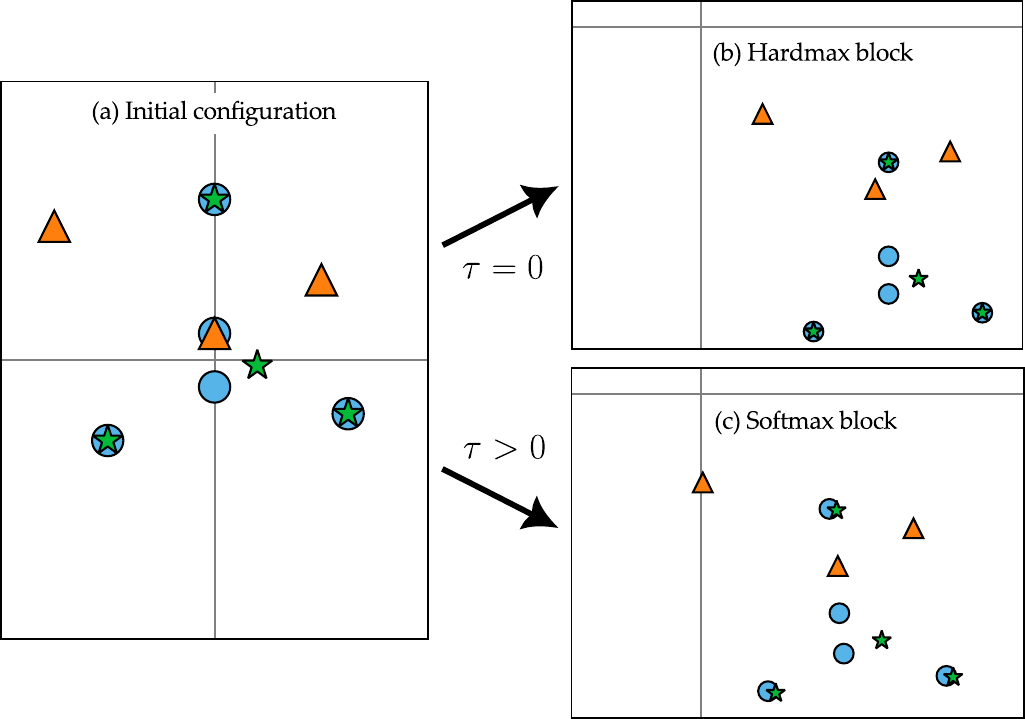}
    \caption{Illustration of \Cref{rem:HMvsSMconstruction} for $N=3$ sequences in $\R^2$. In panel $(a)$, the initial sequences' configuration, with tokens from different sequences denoted with different colors and shapes. Panel $(b)$ shows the effect of one hardmax separation block, while panel $(c)$ shows the effect of a softmax block with $\tau = 1$. Unlike the hardmax case, the softmax block can already separate all sequences, allowing the construction to proceed directly to the collapse step.}
    \label{fig:softmaxRemark}
\end{figure}

\section{Conclusion}\label{sec:conclusion}

We summarize here the main results of the paper and outline several directions for future work.

\subsection{Summary of results}

We presented the first exact interpolation result for transformers in sequence-to-sequence tasks that applies to deep transformers with standard self-attention layers and to real vector-valued outputs. Specifically, we constructed an explicit deep transformer that exactly interpolates real-valued sequences with a number of parameters independent of the input sequence length. Our construction mirrors key features of practical transformers, including the alternation of self-attention and feed-forward layers, and the low-rank structure of self-attention.

Our theoretical framework reveals two fundamental roles of self-attention layers. First, they act as a dimensionality reduction mechanism through a clustering effect, enabling efficient interpolation. Second, they leverage the non-locality of self-attention to disentangle tokens shared across different sequences. These insights help explain the practical efficiency of transformers in sequence interpolation tasks and highlight their parameter efficiency compared to traditional architectures like ResNets. 
In particular, we have shown that self-attention allows for a significant reduction in the number of parameters: our transformer requires $\mathcal{O}(d \sum_{j=1}^N m^j)$ parameters, while a comparable ResNet architecture requires $\mathcal{O}\left(d N n \right)$ parameters, where $n$ denotes a fixed input length (e.g., $n = \max_j n^j$). Since in most practical settings where transformers excel, such as masked language modeling, \( m^j \ll n \), this translates into a reduction in parameter count, especially for long input sequences. Furthermore, the number of parameters in our transformer is independent of the input sequence lengths $n^j$ themselves, in contrast to ResNets.

\subsection{Perspectives and open questions}
Our work leaves room for several future improvements, which we discuss next.

\subsubsection{Complexity, parameter norms and separation}\label{sss:paramNorms}

While our transformer construction exhibits lower complexity, measured in terms of the number of blocks and parameters, than those proposed in previous works \cite{geshkovski2024measure}, we do not know whether it is optimal. In practice, transformers often achieve excellent \textit{approximate} interpolation performance with a number of blocks significantly smaller than the dataset size. This raises the question of whether exact interpolation could also be attained with lower complexity. Addressing this question remains an interesting open problem.

Beyond block and parameter counts, an important open direction is to understand the magnitude of the parameters required in our construction. A priori bounds on the norm of the parameters would be relevant, for example, in the context of regularized training as pointed out in \Cref{sec:implicationsTraining}. Intuitively, one expects the required norms to depend on how well-separated the tokens are in each step of our construction: well-separated configurations may be updated as desired using transformer blocks with moderate parameter values, whereas tightly clustered data may necessitate larger parameters. 
An illustrative example arises in the FF layers acting as ``hat functions'' of \Cref{lem:hatFFClassifier}, where the parameters need to blow up as data separation tends to zero. A systematic study of this effect, both theoretically and numerically, would provide valuable insight into exact interpolation constructions and is left as further work.

\subsubsection{Extension to probability vector outputs}\label{sss:extensionProbVectorOutputs}

An interesting question is related to next-token prediction tasks in language applications. In such settings, it is common that the output sequence is not a single token but a probability vector over the vocabulary (that is, the total number of distinct words in all sequences). One interesting question is then whether transformers can exactly match a finite set of probability vectors. Our construction also applies to this setting under suitable modifications. Specifically, let $\mathcal{V}\in \N$ be the total number of elements in the vocabulary. Then, if the input sequence is given by $X \in (\R^{\mathcal{V}})^n$, the goal of the model is to output the probability vector
$$
Y = (p_1, \dots, p_\mathcal{V}) \in \R^\mathcal{V}, \quad \text{for} \; p_i > 0.
$$
One can easily obtain an exactly interpolating transformer $\T$ which outputs $Y$, provided $d = \mathcal{V}$. This probability-vector setting is closely related to the next-token prediction capacity studied in \cite{madden2025next}, where model complexity bounds are established for one-layer multi-head decoder-only transformers. The observation above shows that exact matching of finitely many probability vectors is also achievable within our construction when $d=\mathcal{V}$. In this sense, our result would be complementary to the results in \cite{madden2025next}, as it arises from a deep single-head sequence-to-sequence architecture. In practice, however, the embedding dimension $d$ is typically much smaller than $\mathcal{V}$. Extending our framework to this low-dimensional regime remains an interesting direction for future work.

\subsubsection{Multi-head attention}\label{sss:multihead}

Another relevant question is whether our arguments can be efficiently extended to transformers with multi-head attention. As heads can be seen as an analogue of width in ResNets, where a trade-off with depth has been shown \cite{alvarez2024interplay}, we wonder if multiple heads can reduce the number of parameters and blocks required to achieve perfect sequence interpolation. For this, one should replace \Cref{eq:selfatt_a} with
$$
    \mathrm{SA}_i(X) = \rho x_i + \sum_{h=1}^H V^h \sum_{\ell = 1}^{\len(X)} \pi_{i\ell}(X,A^h)x_\ell,
$$
for a given number of heads $H\in \N$ and head-specific parameters $A^h, V^h \in \R^{d\times d}$, $h\in [H]$. Our current analysis rigorously covers the case of a single head $H = 1$ and justifies the good performance of these models given sufficient depth \cite{michel2019sixteen}. It would be interesting to investigate whether the additional degrees of freedom provided by multiple heads can reduce the required depth of the transformer, in particular the number of blocks used in the separation step described in \Cref{ss:separationStep}.

\subsubsection{Masked self-attention}\label{sss:maskedSA}

Masking is a common feature of self-attention layers for next-token interpolation tasks in natural language processing, where a token $x_i$ should only be influenced by those preceding it. To model this, one should replace \Cref{eq:selfatt_a} with
$$
    \mathrm{SA}_i(X) = \rho x_i + V \sum_{\ell = 1}^i \pi_{i\ell}(X,A)x_\ell.
$$
Notice that, for each $i\in [\len (X)]$, the sum runs only over $\ell \leq i$. This masked self-attention setting can be handled by our results in \Cref{thm:mainResult} and \Cref{thm:mainResultSoftmax} through a problem reformulation as follows (we assume that $n^j = n$ and $m^j = m$ to simplify the exposition). For all $j\in [N]$, 
\begin{itemize}
    \item each $X^j$ is split into $n$ initial sequences $ X_t^j = \{ x_1^j, \dots, x_t^j \} \subset (\R^d)^t$ for all $t\in [n]$,
    \item each $Y^j$ is split into $n$ output sequences $ Y_t^j = \{ y_t^j \} \subset \R^d$ for all $t\in [m]$ and $Y_t^j = \{ y_m^j\}$ for all $t > m$.
\end{itemize}
Now, we can apply \Cref{thm:mainResult} or \Cref{thm:mainResultSoftmax} to the dataset $\{(X_t^j, Y_t^j)\}_{j\in [N], t\in [n]}$ consisting of $N n$ sequences, and obtain a transformer $\T$ such that $\T (X^j_t) = Y^j_t$ with $\mathcal{O}(Nn)$ blocks and $\mathcal{O}(d N n)$ parameters. Thus, it holds for a causal transformer $\T_\text{causal}$ that
$$
\T_\text{causal} (X^j) = \{ \T(X_1^j),  \dots, \T(X_n^j) \} = \{ y_1^j,  \dots,  y_m^j \} = Y^j,
$$
as desired. This reformulation leads to a complexity that scales with input length $n$ rather than output length $m$, a scaling also observed in large-scale models~\cite{hoffmann2022training}. Designing architectures that avoid this dependence while preserving exact interpolation remains an open challenge.

% \backmatter

% \bmhead{Supplementary information}

% If your article has accompanying supplementary file/s please state so here. 

% Authors reporting data from electrophoretic gels and blots should supply the full unprocessed scans for key as part of their Supplementary information. This may be requested by the editorial team/s if it is missing.

% Please refer to Journal-level guidance for any specific requirements.

\bmhead{Acknowledgements}
The authors thank Martín Hernández, Domènec Ruiz-Balet and Borjan Geshkovski for valuable conversations.

%Acknowledgements are not compulsory. Where included they should be brief. Grant or contribution numbers may be acknowledged.

%Please refer to Journal-level guidance for any specific requirements.

\section*{Declarations}

%Some journals require declarations to be submitted in a standardised format. Please check the Instructions for Authors of the journal to which you are submitting to see if you need to complete this section. If yes, your manuscript must contain the following sections under the heading `Declarations':

\begin{itemize}
\item Funding

The work of AA was funded by the European Union's Horizon Europe MSCA project ModConFlex (grant number 101073558). 
The work of GF is funded by the ANR-DFG project MONET (DFG project number 568735456).
The work of EZ was funded by the Alexander von Humboldt-Professorship program, the ERC Advanced Grant CoDeFeL, the Grants PID2020-112617GB-C22 KiLearn and TED2021-131390B-I00-DasEl of MINECO and PID2023-146872OB-I00-DyCMaMod of MICIU (Spain),  the European Union's Horizon Europe MSCA project ModConFlex (grant number 101073558), the Transregio 154 Project ``Mathematical Modelling, Simulation and Optimization Using the Example of Gas Networks'' of the DFG, the AFOSR 24IOE027 project, the Madrid Government--UAM Agreement for the Excellence of the University Research Staff in the context of the V PRICIT (Regional Programme of Research and Technological Innovation), and the SURE-AI Centre grant 357482, Research Council of Norway. 

\item Competing interests 

The authors declare no competing interests.

\item Ethics approval and consent to participate

Not applicable

\item Consent for publication

All authors consent to the publication of this manuscript.

\item Data availability 

Not applicable

\item Materials availability

Not applicable

\item Code availability 

Code available in the GitHub repository: 

\url{https://github.com/DCN-FAU-AvH/exactInterpTF}

\item Author contribution

All authors contributed to the study conception and design. The initial idea was developed by A. Alcalde and E. Zuazua. The analysis was mainly performed by A. Alcalde and G. Fantuzzi. The manuscript was first written by A. Alcalde and subsequently reviewed and edited by all authors. All authors read and approved the final manuscript.
\end{itemize}

\bibliography{sn-bibliography}% common bib file
%% if required, the content of .bbl file can be included here once bbl is generated
%%\input sn-article.bbl

\end{document}